\documentclass[11pt]{article}

\usepackage[T1]{fontenc}
\usepackage[utf8]{inputenc}
\usepackage{lmodern}

\usepackage[
    letterpaper,
    margin=1in
]{geometry}

\usepackage{amsmath}
\usepackage{amssymb}
\usepackage{amsfonts}
\usepackage{amsthm}
\usepackage{mathtools}
\usepackage{bm}

\usepackage{graphicx}
\graphicspath{{assets/}}
\usepackage{booktabs}
\usepackage{multirow}
\usepackage{array}
\usepackage{tabularx}
\usepackage{float}
\usepackage{placeins}

\usepackage{microtype}
\usepackage{authblk}

\usepackage{enumitem}

\usepackage{xcolor}
\usepackage{xurl}

\usepackage[numbers,sort&compress]{natbib}

\usepackage[
    colorlinks=true,
    linkcolor=blue,
    citecolor=blue,
    urlcolor=blue
]{hyperref}

\usepackage[nameinlink,capitalise,noabbrev]{cleveref}
\newfloat{algorithm}{tbp}{loa}
\floatname{algorithm}{Algorithm}
\floatstyle{ruled}\restylefloat{algorithm}
\crefname{algorithm}{algorithm}{algorithms}
\Crefname{algorithm}{Algorithm}{Algorithms}

\newtheorem{proposition}{Proposition}

\theoremstyle{definition}

\theoremstyle{remark}

\newcommand{\E}{\mathbb{E}}
\newcommand{\Pp}{\mathbb{P}}
\newcommand{\R}{\mathbb{R}}

\newcommand{\ind}{\mathbf{1}}

\DeclareMathOperator{\Var}{Var}

\DeclareMathOperator{\argmin}{arg\,min}

\newcolumntype{Y}{>{\centering\arraybackslash}X}

\title{
    \textbf{Diverse Geometries, Frozen Weights: Robust Heterogeneous
    Treatment-Effect Estimation via Causal Expert Ensembles}
}

\author[1]{Ali Haghpanah Jahromi\thanks{Corresponding author: \href{mailto:alihgpj@gmail.com}{alihgpj@gmail.com}}}
\author[1]{Mohammad Taheri\thanks{\href{mailto:motaheri@shirazu.ac.ir}{motaheri@shirazu.ac.ir}}}
\author[1]{Zohreh Azimifar\thanks{\href{mailto:azimifar@cse.shirazu.ac.ir}{azimifar@cse.shirazu.ac.ir}}}
\affil[1]{Department of Electrical and Computer Engineering, University of Shiraz, Shiraz, Iran}

\date{}

\begin{document}

\maketitle


\begin{abstract}
Estimating heterogeneous treatment effects from observational data is
difficult because the most appropriate inductive bias varies with overlap,
treatment imbalance, prognostic structure, and sample size. We introduce the
Geometry-Diverse Anchor--Correction Expert Ensemble (GeoACE), a five-expert
framework that combines a common anchor--correction estimator with
complementary overlap-aware and outcome-guided geometries. Its task-level
ensemble weights are learned
only from internal validation predictions, frozen before test evaluation, and
then applied to experts refitted on the complete development sample. The fifth
expert, O$\Phi$-ACE, constructs an outcome-free, overlap-aware statistical
projection from covariates and treatment assignment and replaces the anchor
input with this lower-dimensional geometry. We evaluate GeoACE against 11
comparators on eight benchmark protocols. Adding O$\Phi$-ACE reduced mean
$\sqrt{\mathrm{PEHE}}$ relative to the four-expert ensemble on all seven
benchmarks with individual-effect truth, winning 998 of 1,225 paired tasks;
the change on JOBS policy risk was negligible. The five-expert ensemble ranked
first on IHDP100, IHDPA, and IHDPB and second on NEWS, differing from the NEWS
leader by 0.13\%. Across the seven $\sqrt{\mathrm{PEHE}}$ benchmarks it
obtained the lowest observed average rank (3.714), although the omnibus
Friedman and Iman--Davenport tests were not significant ($p=0.328$ and
$p=0.330$). Using the same five frozen experts, inverse-DR weighting was
consistently better than winner-take-all selection, convex DR fitting,
R-stacking, and causal Q-aggregation in benchmark-balanced analyses, but was
statistically indistinguishable from equal weighting and DR ridge shrinkage.
The evidence therefore supports geometry-diverse expert libraries and
leakage-free aggregation as a robustness strategy, not universal superiority
of either GeoACE or one weighting rule.

\end{abstract}


\section{Introduction}
\label{sec:introduction}

\subsection{Context and Motivation}
\label{sec:context-motivation}

Many scientific and decision-making problems require estimating how an outcome would change under an intervention, rather than merely predicting it from observed characteristics. This distinction is crucial in observational studies, where treatment--outcome associations may reflect both causal effects and systematic differences between treated and untreated units. Although randomized controlled trials reduce such differences, they may be costly, slow, unethical, or infeasible; observational data are more widely available, but treatment assignment is generally non-random \citep{rubin1974estimating,imbens2015causal,hernan2020causal,imbens2024causal}.

Population-average effects can also conceal clinically or operationally important variation in response. Heterogeneous treatment-effect (HTE) estimation therefore targets how effects vary with pre-treatment covariates, commonly through the conditional average treatment effect (CATE) \citep{athey2016recursive,wager2018estimation,kunzel2019metalearners,nie2021quasioracle}. Existing approaches include causal forests, meta-learners, and neural estimators such as TARNet, CFRNet, and DragonNet \citep{shalit2017estimating,shi2019adapting}. These methods demonstrate the value of causal inductive biases, but no single estimator is uniformly well matched to differences in overlap, prognostic structure, treatment imbalance, and sample size across datasets.

This work addresses that variability by constructing several complementary causal experts within a shared anchor--correction framework and combining them using weights learned exclusively from internal validation predictions. The resulting design retains a common predictive backbone while allowing individual experts to emphasize different aspects of the observational problem. It further separates weight selection, expert refitting, and final evaluation to prevent test information from influencing the learned ensemble.

\subsection{Problem Definition and Technical Challenges}
\label{sec:problem-definition}

Consider observations \(\mathcal{D}=\{(X_i,T_i,Y_i)\}_{i=1}^{n}\), where \(X_i\in\R^p\) contains pre-treatment covariates, \(T_i\in\{0,1\}\) is the treatment, and \(Y_i\) is the observed outcome. Under the potential-outcomes framework, each unit has potential outcomes \(Y_i(1)\) and \(Y_i(0)\), but only \(Y_i=T_iY_i(1)+(1-T_i)Y_i(0)\) is observed \citep{rubin1974estimating,imbens2015causal}. Because the unit-level contrast is never jointly observed, we target
\begin{equation}
    \tau(x)
    =
    \E\!\left[Y(1)-Y(0)\mid X=x\right].
    \label{eq:cate}
\end{equation}
Here, individualized prediction refers to estimating \(\tau(X_i)\), not observing the latent unit-specific effect. Identification relies on consistency, conditional exchangeability, and overlap \citep{hernan2020causal,imbens2015causal}. Under these assumptions, the treatment-specific outcome functions \(\mu_t(x)=\E[Y\mid T=t,X=x]\) identify the CATE as \(\mu_1(x)-\mu_0(x)\). The treatment mechanism is summarized by the propensity score \(e(x)=\Pp(T=1\mid X=x)\) \citep{rosenbaum1983central}. These quantities underpin the estimators reviewed in \cref{sec:related-work}; their formal definitions and assumptions are stated in \cref{sec:problem-setup}.

Five recurring challenges motivate the proposed design:

\begin{enumerate}[label=(\roman*), leftmargin=*]

\item \textbf{Unmeasured confounding.}
Relevant common causes of treatment and outcome may be unavailable or poorly measured, so standard CATE estimators remain dependent on the adopted identification assumptions \citep{hernan2020causal,imbens2015causal}.

\item \textbf{Treatment--control imbalance.}
Non-random assignment can produce different covariate distributions across treatment arms; consequently, accurate factual prediction need not imply reliable counterfactual prediction, motivating representation-balancing approaches such as CFRNet \citep{shalit2017estimating}.

\item \textbf{Limited overlap and positivity violations.}
When \(e(x)\) approaches zero or one, one potential-outcome surface is weakly supported by observations. Greater model flexibility or global representation balance alone cannot resolve severe positivity violations \citep{hernan2020causal,imbens2015causal}.

\item \textbf{Data sparsity in rare subgroups.}
Rare covariate profiles may contain few observations within each treatment arm, increasing variance and the risk that flexible estimators learn subgroup noise rather than genuine heterogeneity \citep{athey2016recursive,wager2018estimation}.

\item \textbf{Interference and spillover effects.}
The standard formulation assumes that one unit's outcome is unaffected by other units' treatments. Network or spillover mechanisms violate this assumption and remain outside the scope of the estimators studied here \citep{hernan2020causal,imbens2015causal}.

\end{enumerate}

Two further difficulties affect model development: nuisance-model error can propagate into CATE estimates, motivating targeted and orthogonalized objectives \citep{shi2019adapting,nie2021quasioracle}; and the absence of observed unit-level effect labels complicates validation and model selection. Evaluation therefore commonly relies on randomized or semi-synthetic benchmarks and indirect measures of heterogeneity or decision quality \citep{shalit2017estimating,verstraete2023estimating}. This second difficulty is central to our validation-based ensemble construction.

\subsection{Contributions}
\label{sec:contributions}

The contributions center on complementary causal experts and their leakage-safe combination:

\begin{itemize}[leftmargin=*]
\item We construct five experts on a shared anchor--correction backbone so
their differences primarily reflect overlap and outcome geometries rather
than unrelated architectures.

\item We introduce O$\Phi$-ACE, an outcome-free treatment-aware projection
that replaces the anchor input and supplies a parsimonious source of diversity
alongside overlap weighting and outcome-guided geometries.

\item GeoACE learns one task-level convex weight vector from internal
validation predictions and freezes it before expert refitting and test
evaluation, avoiding both test leakage and a high-variance individual gate.

\item We compare factual, doubly robust, ridge-regularized, equal-weight,
selection, stacking, and Q-aggregation controls using the same frozen expert
predictions.

\item Across eight benchmark protocols, the results support controlled expert
diversity and leakage-safe aggregation while remaining explicitly
benchmark-dependent rather than claiming universal dominance.
\end{itemize}


\section{Related Work}
\label{sec:related-work}

\subsection{Methodological Families and Their Trade-offs}
\label{sec:rw-taxonomy}

Methods for conditional average treatment-effect (CATE) estimation differ
primarily in how they infer the unobserved potential outcome and control the
resulting extrapolation. Classical outcome regressions encode heterogeneity
through treatment--covariate interactions and are readily interpretable, but
misspecified functional forms or omitted interactions can distort nonlinear
CATE surfaces \citep{hu2023heterogeneous}. Propensity-score matching,
subclassification, and weighting instead improve observed-arm comparability
\citep{rosenbaum1983central}; under suitable conditions weighting can be
efficient \citep{hirano2003efficient}, although extreme propensities amplify
variance and neither strategy removes dependence on measured confounding and
overlap.

Meta-learners organize ordinary prediction algorithms into causal estimators
\citep{kunzel2019metalearners,curth2021nonparametric}. The S-learner shares one
response model across arms and can be data-efficient, but may attenuate a weak
treatment signal. The T-learner permits distinct response surfaces, at the cost
of instability when an arm is small. The X-learner imputes arm-specific effects
and combines them, often using propensity information; it can exploit treatment
imbalance and a simple effect structure, but inherits first-stage outcome error.
Their modularity is useful, yet factual predictive accuracy alone does not
guarantee accurate counterfactual contrasts.

Pseudo-outcome and orthogonal learners align estimation more directly with the
effect. Doubly robust (DR) learners combine outcome and propensity estimates in
a corrected pseudo-outcome, while the R-learner residualizes both outcome and
treatment \citep{nie2021quasioracle,kennedy2023towards}. Orthogonal objectives
and cross-fitting can reduce first-order sensitivity to nuisance error
\citep{chernozhukov2018double}, but corrections become noisy under weak overlap
and remain dependent on identification assumptions. In particular, a DR
pseudo-outcome is an observable validation surrogate, not observed
individual-effect truth.

\subsection{Local, Bayesian, and Representation-Based Estimators}
\label{sec:rw-flexible}

Among flexible approaches to CATE estimation, tree-based methods provide a natural way to capture nonlinear and subgroup-specific treatment-effect heterogeneity. Causal trees partition covariate space by treatment-effect heterogeneity; honest sample splitting reduces adaptive bias, and forests stabilize the local estimates \citep{athey2016recursive,wager2018estimation}. Generalized Random Forests cast trees as adaptive neighborhood weights for local moment equations \citep{athey2019generalized}, whereas Orthogonal Random Forests and CausalForestDML combine localization with residualization or orthogonal scores \citep{oprescu2019orthogonal,chernozhukov2018double}. These methods capture nonlinear subgroup structure without prespecified interactions, but sparse neighborhoods and limited support can still destabilize local effects.

Bayesian approaches add posterior uncertainty to flexible response modeling.
BART estimates a regularized sum-of-trees response surface and contrasts its
two treatment evaluations \citep{chipman2010bart,hill2011bayesian}; BCF further
separates prognosis from treatment response and uses propensity information to
mitigate regularization-induced confounding \citep{hahn2020bayesian}. Multi-task
Gaussian processes share information between potential-outcome functions
through a joint covariance \citep{alaa2017bayesian}. Their uncertainty can
highlight data-poor regions, but remains conditional on the model and does not
protect against violations of exchangeability or positivity.

Among flexible CATE estimators, neural methods offer several ways to control
which information is shared, balanced, or separated across treatment arms.
TARNet uses a common representation with treatment-specific heads; CFRNet adds
MMD or Wasserstein balance, and SITE preserves local similarity among
informative, comparable observations
\citep{shalit2017estimating,yao2018site}. Balancing can improve cross-arm
comparability, but excessive balance may discard prognostic or effect-modifying
variation. DragonNet instead jointly estimates potential outcomes and
propensity and uses targeted regularization \citep{shi2019adapting}, while
disentangled architectures separate treatment- and outcome-related information
\citep{chu2021disentangled}. These alternatives reflect a genuine trade-off:
retaining treatment-predictive structure can impair alignment, whereas removing
it can erase outcome-relevant signal.

More specialized neural designs target treatment-effect heterogeneity through
their training objectives or information-sharing mechanisms. SubgroupTE
iteratively links soft response groups with outcome and propensity prediction
\citep{lee2025subgroupte}; HyperITE learns dynamic parameter sharing between
arm-specific predictors \citep{chauhan2024hyperite}; and PairNet trains on
cross-treatment outcome differences, with performance also depending on its
backbone and pairing rule \citep{nagalapatti2024pairnet}. These mechanisms can
better match heterogeneous response structure than a fixed shared/separate
architecture, but add estimation or optimization complexity and may be
unstable when the available sample is small.

\subsection{Latent and Prior-Fitted Models}
\label{sec:rw-generative}

Generative models infer missing responses or latent causal structure. CEVAE
uses proxies and a variational confounder representation
\citep{louizos2017cevae}; GANITE adversarially completes counterfactual outcomes
before effect estimation \citep{yoon2018ganite}; and TEDVAE separates
instrumental, risk, and confounding factors \citep{zhang2021tedvae}. Their
flexibility is valuable when the assumed latent structure is supported, but
the inferred counterfactual remains model-dependent, and validity depends on
proxy quality, generative assumptions, and optimization stability.

Prior-Fitted Networks amortize inference across simulated tasks
\citep{muller2022transformers}. CausalPFN applies this principle to CATE
estimation through in-context transformer predictions without task-specific
optimization \citep{balazadeh2025causalpfn}. This can make inference rapid, but
accuracy depends on correspondence between the pretraining prior and the target
data-generating process; it does not relax ignorability or overlap.

\subsection{CATE Model Selection and Aggregation}
\label{sec:rw-aggregation}

The diversity of CATE estimators creates a second problem after estimation:
the individual treatment effect is unavailable for ordinary validation, so a
candidate cannot be selected by direct test error. Orthogonal and doubly
robust pseudo-outcome losses provide observable proxies for this purpose, but
their variance can be substantial when overlap is weak or nuisance estimates
are inaccurate. The R-learner supplies a related residualized criterion
\citep{nie2021quasioracle}, while causal Q-aggregation combines candidate CATE
functions under a doubly robust loss and provides oracle model-selection
regret guarantees with higher-order nuisance-error terms
\citep{lan2024causalq}. These results make explicit that model selection and
model averaging are themselves causal estimation problems rather than routine
supervised validation.

Other ensemble constructions operate at different levels. Multi-task deep
ensembles combine jointly trained causal networks to improve stability on
complex inputs \citep{jiang2023multitask}, whereas inverse-variance weighting
adapts pseudo-outcome regression to heterogeneous conditional precision
\citep{fisher2024inversevariance}. Such methods motivate aggregation, but they
do not answer whether a controlled set of experts with different overlap and
outcome geometries should be averaged uniformly, selected individually, or
combined through an effect-aligned validation loss. GeoACE studies that
question under a common backbone and a fixed test firewall. Its primary
inverse-DR rule is therefore evaluated against equal weighting, one-expert
selection, convex DR fitting, R-stacking, causal Q-aggregation, and DR ridge
using the same five frozen prediction columns.

\subsection{Comparative Synthesis}
\label{sec:rw-comparative}

No family dominates across all regimes. Shared models can borrow strength but
suppress heterogeneity; separate models can respect arm-specific responses but
waste information under imbalance. Orthogonal corrections reduce nuisance
sensitivity but may be variable near propensity boundaries. Balancing improves
comparability but can remove useful signal, whereas latent, subgroup, and
prior-fitted models introduce stronger structural or distributional
assumptions. These complementary failure modes motivate combining controlled
experts rather than selecting one inductive bias a priori.

The evaluated methods target different combinations of information sharing,
distribution balance, overlap, nuisance robustness, and effect-aligned
training. These design targets should not be interpreted as guarantees under
unmeasured confounding or structural positivity violations.

\paragraph{Positioning.}
GeoACE addresses regime dependence with a controlled library of experts that
share one backbone but encode distinct overlap and outcome geometries. Unlike
winner-take-all selection or an individual-level gate, it estimates a small
task-level simplex from observable validation predictions and freezes that
choice before test evaluation. The design targets model-selection uncertainty
and finite-sample instability; it does not relax exchangeability or positivity
requirements, and its validation criteria remain noisy causal surrogates.

\section{Problem Setup}
\label{sec:problem-setup}

\subsection{Observed Data and Target Estimand}
\label{sec:observed-data-estimand}

Let $O_i=(X_i,T_i,Y_i)$, $i=1,\ldots,n$, denote independent
observations from a population of interest. Here, $X_i\in\mathcal{X}\subseteq
\R^p$ contains pre-treatment covariates, $T_i\in\{0,1\}$ is a binary
treatment indicator, and $Y_i$ is the factual outcome. Under the potential-
outcomes notation, each unit has two potential responses, $Y_i(0)$ and
$Y_i(1)$, but only
\begin{equation}
Y_i=T_iY_i(1)+(1-T_i)Y_i(0)
\label{eq:consistency-observed-outcome}
\end{equation}
is observed. We write
\begin{equation}
\mu_t(x)=\E\{Y(t)\mid X=x\},\qquad t\in\{0,1\},
\label{eq:conditional-response-functions}
\end{equation}
and define the conditional average treatment effect (CATE) as
\begin{equation}
\tau(x)=\E\{Y(1)-Y(0)\mid X=x\}
       =\mu_1(x)-\mu_0(x).
\label{eq:cate-target}
\end{equation}
The corresponding average treatment effect is
$\operatorname{ATE}=\E\{\tau(X)\}$. The principal target of this work is the
function $\tau(\cdot)$; ATE, ATT, potential-outcome error, and policy-oriented
quantities are treated as complementary evaluation criteria.

Our causal interpretation uses the standard identification conditions
\citep{rubin1974estimating,rosenbaum1983central,imbens2015causal}.
Specifically, we assume consistency, no interference between units,
conditional exchangeability,
\begin{equation}
\{Y(0),Y(1)\}\perp T\mid X,
\label{eq:conditional-exchangeability}
\end{equation}
and positivity,
\begin{equation}
0<e(x)=\Pp(T=1\mid X=x)<1
\label{eq:positivity}
\end{equation}
on the covariate region for which effects are estimated. Under these
conditions,
\begin{equation}
\tau(x)=\E(Y\mid X=x,T=1)-\E(Y\mid X=x,T=0).
\label{eq:identified-cate}
\end{equation}
These assumptions identify the estimand; they are not consequences of the
proposed learning procedure. In particular, neither representation balancing
nor ensemble weighting repairs unmeasured confounding or a structural absence
of one treatment arm.

\subsection{Data Partitioning and Information Boundaries}
\label{sec:data-information-boundary}

For each benchmark task or replication, the development sample is partitioned
into a fitting subset $\mathcal{I}_{\mathrm{fit}}$ and an internal validation
subset $\mathcal{I}_{\mathrm{val}}$. Nuisance estimation, anchor construction,
checkpoint selection, and expert weighting are conducted exclusively within
the development sample under the fixed partition defined by the benchmark
protocol. The held-out test set $\mathcal{I}_{\mathrm{te}}$ is excluded from
all model-selection and estimation stages. Its factual outcomes are used only
for final performance assessment, while simulated counterfactual outcomes,
when available, are accessed only after the corresponding predictions have
been finalized and recorded.

This separation is particularly important in CATE estimation because
individual treatment effects are not observed in standard validation data. For
a candidate estimator $\widehat{\tau}$, the ideal CATE risk
\begin{equation}
\mathcal{R}_{\tau}(\widehat{\tau})
=\E\left[{\widehat{\tau}(X)-\tau(X)}^2\right]
\label{eq:ideal-cate-risk}
\end{equation}
cannot be evaluated directly in the absence of counterfactual labels.
Accordingly, expert selection and ensemble weighting must be based on
observable proxy criteria. We use factual prediction error and a doubly robust
(DR) pseudo-outcome as complementary validation signals. Factual error provides
a comparatively stable measure of response-surface accuracy but is not
directly targeted at the treatment-effect contrast. In comparison, the DR
criterion is more closely aligned with CATE estimation, although it may exhibit
greater variability because it depends on propensity correction and estimated
nuisance functions.

\subsection{Expert Predictions and Convex Aggregation}
\label{sec:expert-ensemble-outputs}

Let $K$ denote the number of causal experts and let
$j\in\{1,\ldots,K\}$ index an individual expert. For a covariate profile
$x\in\mathcal{X}$, expert $j$ estimates the two treatment-specific conditional
response functions and their contrast:
\begin{equation}
\widehat{\mu}_{0j}(x),\qquad
\widehat{\mu}_{1j}(x),\qquad
\widehat{\tau}_j(x)
=
\widehat{\mu}_{1j}(x)-\widehat{\mu}_{0j}(x).
\label{eq:expert-outputs}
\end{equation}
In \cref{eq:expert-outputs}, $\widehat{\mu}_{tj}(x)$ denotes the
potential-outcome prediction of expert $j$ under treatment state
$t\in\{0,1\}$, and $\widehat{\tau}_j(x)$ is the corresponding CATE estimate.
The present implementation uses $K=5$ experts. Within each expert, predictions
are first averaged over the same prespecified set of model seeds. The
subsequent aggregation therefore operates across expert-level predictions
rather than across individual random initializations.

To combine the experts, let $w=(w_1,\ldots,w_K)^\top$ denote a vector of
aggregation weights constrained to the probability simplex
\begin{equation}
\Delta_K
=
\left\{
w\in\mathbb{R}^K:
w_j\geq 0\ \text{for all }j,
\quad
\sum_{j=1}^{K}w_j=1
\right\}.
\label{eq:weight-simplex}
\end{equation}
Here, $w_j$ represents the contribution assigned to expert $j$. The
nonnegativity and unit-sum constraints in \cref{eq:weight-simplex} ensure that
the final estimator is a convex combination of the expert predictions.

Given $w\in\Delta_K$, the ensemble estimates the treatment-specific response
functions as
\begin{equation}
\widehat{\mu}_t^{\mathrm{ens}}(x;w)
=
\sum_{j=1}^{K}w_j\widehat{\mu}_{tj}(x),
\qquad t\in\{0,1\},
\label{eq:ensemble-mu}
\end{equation}
where $\widehat{\mu}_t^{\mathrm{ens}}(x;w)$ denotes the aggregated potential
outcome under treatment state $t$. The ensemble CATE estimate is then obtained
by contrasting the two aggregated response surfaces:
\begin{align}
\widehat{\tau}^{\mathrm{ens}}(x;w)
&=
\widehat{\mu}_1^{\mathrm{ens}}(x;w)
-
\widehat{\mu}_0^{\mathrm{ens}}(x;w)
\nonumber\\
&=
\sum_{j=1}^{K}w_j\widehat{\tau}_j(x).
\label{eq:ensemble-tau}
\end{align}
Thus, \cref{eq:ensemble-mu,eq:ensemble-tau} preserve internal coherence between
the ensemble potential-outcome predictions and their treatment-effect
contrast: aggregating the two response surfaces and then taking their
difference is algebraically equivalent to aggregating the expert-specific CATE
estimates directly.

A single weight vector $w$ is learned for each benchmark task or replication
and is subsequently held fixed across all individuals within that task. This
restriction provides task-level adaptation while avoiding the additional
estimation variance associated with learning a covariate-dependent gating
function from validation data for which individual causal effects are not
observed.


\section{Proposed Method}
\label{sec:method}

\subsection{Overview}
\label{sec:method-overview}

The proposed Geometry-Diverse Anchor--Correction Expert Ensemble (GeoACE) separates
the estimation problem into two levels. At the first level, five experts share
the same anchor--correction architecture but introduce different inductive
biases concerning outcome structure, overlap, and treatment--control
imbalance. At the second level, a low-dimensional selector learns how much to
rely on each expert using only internal validation observations. The weights
are subsequently frozen, the experts are refitted on all development data,
and the locked convex combination is evaluated once on the test set.

GeoACE learns a combination rule at a meta level over already specified
causal experts, but it is not episodic, few-shot, or gradient-based
meta-learning. The term \emph{validation weighting} is therefore used for the
aggregation mechanism: observable internal-validation predictions determine
one task-level simplex weight vector, which is frozen before test evaluation.

This construction is intentionally intermediate between choosing a single
estimator and training a fully adaptive mixture of experts. A single globally
selected model discards potentially useful complementary predictions. A
covariate-dependent gate is more expressive, but it also requires learning a
new function from noisy surrogate treatment-effect labels. GeoACE instead
learns only $K-1$ free task-level parameters and therefore allocates most of
the available sample to estimating the causal response functions themselves.

\subsection{Common Anchor--Correction Backbone}
\label{sec:anchor-correction-backbone}

For each treatment state $t\in\{0,1\}$, a structured low-dimensional anchor
estimator provides an initial response estimate $a_t(x)$. It also produces a
prior vector $r(x)$ containing the two anchor predictions, their contrast,
and the retained low-dimensional coordinates. This construction gives every expert the same
statistically structured starting point, while allowing its neural component
to learn departures supported by the observed data.

More precisely, let $u^{(j)}(x)\in\mathbb R^{1\times p_j}$ denote the row
vector supplied to the anchor of expert $j$, where $p_j$ is its input
dimension. For ACE and OW-ACE, $u^{(j)}(x)=x^{\mathrm{std}}$; for the three
geometry experts, $u^{(j)}(x)=\Phi^{G}(x^{\mathrm{std}})$,
$u^{(j)}(x)=\Phi^{A}(x^{\mathrm{std}})$, and
$u^{(j)}(x)=\Phi^{O}(x^{\mathrm{std}})$, respectively. Arm-specific ridge
outcome fits define an orthonormal basis
$B^{(j)}\in\mathbb R^{p_j\times d_j}$, where $d_j\in\{1,2\}$ and
$B^{(j)\top}B^{(j)}=I_{d_j}$, spanning the available average-prognostic and
treatment-contrast directions. Define
$\bar u^{(j)}=|\mathcal I_{\mathrm{fit}}|^{-1}
\sum_{i\in\mathcal I_{\mathrm{fit}}}u^{(j)}(X_i)$ and
$P^{(j)}=B^{(j)}B^{(j)\top}\in\mathbb R^{p_j\times p_j}$. Under this row-vector
convention, the
structured anchor and its retained coordinates are
\begin{equation}
\begin{aligned}
\widetilde u^{(j)}(x)
&=\bar u^{(j)}+\{u^{(j)}(x)-\bar u^{(j)}\}P^{(j)},\\
a_t^{(j)}(x)
&=f_{t,\alpha}^{(j)}\!\left(\widetilde u^{(j)}(x)\right),
\qquad t\in\{0,1\},\\
\bigl(z_1^{(j)}(x),z_2^{(j)}(x)\bigr)
&=\operatorname{Std}_{\mathrm{fit}}
\!\left[\{u^{(j)}(x)-\bar u^{(j)}\}B^{(j)}\right],\\
r^{(j)}(x)
&=\left[a_0^{(j)}(x),a_1^{(j)}(x),
a_1^{(j)}(x)-a_0^{(j)}(x),z_1^{(j)}(x),z_2^{(j)}(x)\right].
\end{aligned}
\label{eq:structured-anchor-prior}
\end{equation}
Here, $f_{t,\alpha}^{(j)}$ is the arm-specific nonlinear ridge predictor, and
its penalty $\alpha$ is selected using only an inner split of the fitting data.
The operator $\operatorname{Std}_{\mathrm{fit}}$ standardizes each retained
coordinate using its fitting-subset mean and standard deviation; a coordinate
with zero fitted variance is set to zero. If $d_j=1$, the unavailable second
coordinate $z_2^{(j)}$ is zero-padded. For the remainder of this subsection we
fix expert $j$ and suppress its superscript to avoid clutter.

Let $h_\theta:\mathcal{X}\rightarrow\mathbb{R}^{d_h}$ denote the representation
network shared by the two treatment heads within this expert, where $\theta$
collects its trainable parameters and $d_h$ is the representation dimension.
Parameters are not shared across separately trained experts. The representation is concatenated
with $r(x)$ and passed to two treatment-specific correction functions:
\begin{equation}
q_t(x)=g_{t,\theta}\!\left([h_\theta(x),r(x)]\right),
\qquad t\in\{0,1\}.
\label{eq:correction-heads}
\end{equation}
In \cref{eq:correction-heads}, $g_{t,\theta}$ is the correction head for
treatment state $t$, $[\cdot,\cdot]$ denotes vector concatenation, and
$q_t(x)$ is the learned anchor-relative correction.

For a continuous outcome, the treatment-specific response surface is
\begin{equation}
\widehat\mu_t(x)=a_t(x)+q_t(x).
\label{eq:continuous-anchor-correction}
\end{equation}
Thus, \cref{eq:continuous-anchor-correction} adds the learned correction to
the initial anchor on the outcome scale. For a binary outcome, the analogous
update is made on the log-odds scale:
\begin{equation}
\widehat\mu_t(x)
=\operatorname{expit}\!\left\{\operatorname{logit}\{a_t(x)\}+q_t(x)\right\}.
\label{eq:binary-anchor-correction}
\end{equation}
Here, $\operatorname{logit}(p)=\log\{p/(1-p)\}$ and
$\operatorname{expit}(z)=\{1+\exp(-z)\}^{-1}$. Before applying the logit, a
binary anchor probability is clipped to $[10^{-5},1-10^{-5}]$. The final layer of each
correction head is initialized at zero. Consequently,
\cref{eq:continuous-anchor-correction,eq:binary-anchor-correction} reproduce
the anchor exactly at initialization rather than beginning from unrelated
response surfaces.

For an observation $(X_i,T_i,Y_i)$, define the factual prediction as
$\widehat\mu_{T_i}(X_i)=T_i\widehat\mu_1(X_i)+(1-T_i)\widehat\mu_0(X_i)$.
The reference and geometry-based experts use arm-frequency weights
\begin{equation}
\omega_i^{\mathrm{arm}}
=\frac{T_i}{2\widehat\pi_c}
 +\frac{1-T_i}{2(1-\widehat\pi_c)},
\qquad
\widehat\pi=\frac{1}{|\mathcal{I}_{\mathrm{fit}}|}
\sum_{i\in\mathcal{I}_{\mathrm{fit}}}T_i,
\qquad
\widehat\pi_c=\min\{0.97,\max(0.03,\widehat\pi)\}.
\label{eq:arm-frequency-weight}
\end{equation}
In \cref{eq:arm-frequency-weight}, $\widehat\pi$ is the treated fraction in
the fitting subset $\mathcal{I}_{\mathrm{fit}}$ and $\widehat\pi_c$ is its
numerically clipped value; the factor $1/2$ places the
two observed treatment arms on a comparable scale.

For continuous outcomes, the minibatch objective is
\begin{align}
\mathcal{L}_{\mathrm{ACE}}(\theta)
={}&\frac{1}{|\mathcal{B}|}\sum_{i\in\mathcal{B}}
\omega_i\{Y_i-\widehat\mu_{T_i}(X_i)\}^2 \nonumber\\
&+\lambda_a\frac{1}{|\mathcal{B}|}\sum_{i\in\mathcal{B}}
\{q_0(X_i)^2+q_1(X_i)^2\}
+\lambda_o\Omega_{\mathrm{out}}(\theta).
\label{eq:ace-training-objective}
\end{align}
Here, $\mathcal{B}$ denotes a minibatch, $\omega_i$ is the expert-specific
observation weight, $\lambda_a\geq0$ controls shrinkage toward the anchor,
$\Omega_{\mathrm{out}}(\theta)$ is the squared $\ell_2$ norm of the output-layer
weights, and $\lambda_o\geq0$ is its regularization coefficient. For binary
outcomes, the first term in \cref{eq:ace-training-objective} is replaced by
weighted factual cross-entropy; the remaining terms are unchanged. The
anchor-deviation penalty regularizes both potential-outcome surfaces, including
the unobserved surface for each individual, and therefore limits unsupported
extrapolation away from the structured initializer.

Except for the overlap-weighted expert described below, $\omega_i$ in
\cref{eq:ace-training-objective} equals
$\omega_i^{\mathrm{arm}}$ from \cref{eq:arm-frequency-weight}. For each
prespecified random seed, training duration is selected by the internal
validation objective. The selected duration is recorded and subsequently
reused when that seed is refitted on the complete development sample.

The shared backbone serves two methodological purposes. First, it controls
architectural variation, so differences among experts can be attributed
primarily to their intended causal inductive biases. Second, it provides a
common fallback estimator: a large correction is admitted only when its gain
in factual fit offsets the explicit penalty for departing from the anchor.

\subsection{Complementary Causal Experts}
\label{sec:causal-experts}

GeoACE combines five experts whose distinguishing mechanisms are compared
compactly in \cref{tab:compact-expert-comparison}. The Reference Anchor--Correction Expert
(ACE) retains the unmodified backbone. The Overlap-Weighted Anchor--Correction
Expert (OW-ACE) changes the training emphasis. The three remaining experts alter
the geometry presented to the anchor through a learned map $\Phi$: the Global
Outcome-Guided Geometry Expert (G$\Phi$-ACE) uses pooled outcome information,
whereas the Arm-Specific Outcome-Guided Geometry Expert (A$\Phi$-ACE) preserves
treatment-specific outcome directions. The Overlap-Geometry Expert
(O$\Phi$-ACE) constructs an outcome-free, overlap-aware projection that
replaces the anchor input. These mechanisms are complementary
rather than nested claims that one expert must dominate in every task.

The five experts are deliberately different in where they impose caution,
not in the causal estimand they seek to recover. ACE supplies the common
reference and asks whether the factual evidence warrants a correction.
OW-ACE leaves both the architecture and anchor geometry unchanged but changes
which observations exert the greatest influence on that correction. The three
$\Phi$ experts instead leave the correction network on the original
standardized covariates and intervene only in the coordinate system used by
the anchor. G$\Phi$-ACE expresses the hypothesis that a
shared prognostic structure is adequate, whereas A$\Phi$-ACE allows outcome
associations to differ across treatment arms. O$\Phi$-ACE instead excludes the
outcome from geometry estimation and replaces the raw anchor input by a compact
overlap-supported projection. Thus, the ensemble juxtaposes five distinct
responses to finite-sample uncertainty: a stable reference, overlap-centered
evidence, globally outcome-guided geometry, arm-specific outcome-guided
geometry, and outcome-free overlap geometry. None changes the declared CATE target,
manufactures common support, or removes bias from unmeasured confounding;
their value lies in producing auditable diversity in inductive bias for the
validation selector to assess.

\subsubsection{Reference Anchor--Correction Expert (ACE)}

ACE is defined by
\cref{eq:correction-heads,eq:continuous-anchor-correction,eq:binary-anchor-correction,eq:ace-training-objective}.
It constructs the anchor from the original standardized covariates, applies no
learned geometry map, and uses the arm-frequency weights in
\cref{eq:arm-frequency-weight}. ACE is not introduced as a deliberately weak
baseline. It is the reference estimator against which each additional source
of specialization is isolated.

\subsubsection{Overlap-Weighted Anchor--Correction Expert (OW-ACE)}

OW-ACE retains the architecture and regularization of ACE but replaces the
arm-frequency contribution to the factual loss. Let
$\widehat e_i=\widehat e(X_i)$ be the estimated propensity score. Within a
minibatch, OW-ACE uses
\begin{equation}
\omega_i^{\mathrm{ov}}
=\frac{\widehat e_i(1-\widehat e_i)}
       {|\mathcal{B}|^{-1}\sum_{\ell\in\mathcal{B}}
        \widehat e_\ell(1-\widehat e_\ell)}.
\label{eq:overlap-training-weight}
\end{equation}
The denominator in \cref{eq:overlap-training-weight} normalizes the mean
minibatch weight to one, preserving the scale of the optimization objective.
The numerator is largest at $\widehat e_i=1/2$ and approaches zero as treatment
assignment becomes nearly deterministic. Accordingly, this expert directs
the finite-sample correction toward covariate regions supported by both
treatment arms. It neither creates overlap where none exists nor changes the
declared conditional average treatment effect (CATE) estimand; instead, it
modifies the regions exerting the greatest influence during training,
consistent with the broader motivation of overlap-aware estimation
\citep{rosenbaum1983central,hirano2003efficient}.

\subsubsection{Outcome-Guided and Outcome-Free Overlap Geometry}
\label{sec:phi-geometry}

The three geometry experts modify only the coordinate system used by the
structured anchor. Their aim is to retain covariate directions that are useful
for factual-outcome prediction and supported by both treatment arms, while
discouraging directions dominated by treatment-group separation. The neural
correction network still receives the original standardized covariates, so
$\Phi$ changes the structured starting estimate without enlarging the
correction network. Every quantity used to estimate $\Phi$ is computed only
from $\mathcal I_{\mathrm{fit}}$.

Let $S_{\mathrm{ov}},S_Y,S_B\in\mathbb R^{p\times p}$ be symmetric positive
semidefinite matrices summarizing overlap-weighted covariate variation,
factual-outcome utility, and treatment-group imbalance, respectively.

Conceptually, the three matrices ask different questions about a candidate
covariate direction. $S_{\mathrm{ov}}$ describes how much variation remains
after giving greater weight to observations whose estimated treatment
probability is nearer one half, where comparisons between treatment arms are
more plausible. $S_Y$ captures whether that direction is associated with the
observed outcome, using either pooled or arm-specific associations; the
outcome-free expert sets this term to zero. $S_B$ captures measured ways in
which the treatment groups separate: a covariance-scaled difference in means
for G$\Phi$-ACE, or differences in means and covariances together with a
propensity-predictive direction for A$\Phi$-ACE and O$\Phi$-ACE. The geometry
therefore favors directions with overlap-supported variation and, when used,
outcome signal relative to measured treatment-group separation. This ranking
does not create overlap or account for unmeasured confounding.

After putting their nonzero components on a common trace scale, the retained
directions solve
\begin{equation}
\left(S_{\mathrm{ov}}+S_Y+\eta I_p\right)v
=
\lambda\left(S_B+\rho I_p\right)v,
\label{eq:phi-generalized-eigenproblem}
\end{equation}
where $v\in\mathbb R^p$, $I_p$ is the $p\times p$ identity matrix, and
$\eta,\rho>0$ provide numerical stabilization. Consequently,
$S_B+\rho I_p$ is positive definite. Equivalently, the
generalized eigenvectors rank directions by
\begin{equation}
\mathcal{Q}(v)=
\frac{v^\top(S_{\mathrm{ov}}+S_Y+\eta I_p)v}
     {v^\top(S_B+\rho I_p)v}.
\label{eq:phi-rayleigh-quotient}
\end{equation}
Thus, a direction receives a high score when it preserves overlap-supported
variation or outcome signal relative to its measured treatment imbalance.

For the $k=\lceil p/2\rceil$ largest generalized eigenvalues, let
$V_k=[v_1,\ldots,v_k]$. The projected coordinates are standardized using
fitting-subset moments and concatenated with the original standardized
covariates:
\begin{equation}
\Phi(X)=
\left[
X^{\mathrm{std}},
\operatorname{Std}_{\mathrm{fit}}
\!\left\{X^{\mathrm{std}}V_k\right\}
\right].
\label{eq:phi-augmented-map}
\end{equation}
For G$\Phi$-ACE and A$\Phi$-ACE, the concatenation prevents the geometry step from discarding the original
covariate information.

The experts differ in how $S_Y$ and $S_B$ are formed. G$\Phi$-ACE uses a
pooled overlap-weighted covariate--outcome association for $S_Y$ and penalizes
the treated--control mean difference after scaling it by pooled within-arm
covariance. It then applies ZCA whitening to reduce linear redundancy among
the retained coordinates. A$\Phi$-ACE instead constructs outcome utility
separately within each treatment arm and combines penalties for mean
difference, covariance difference, and the propensity-predictive direction;
it does not whiten the projection. Consequently, the global expert favors a
compact shared prognostic geometry, whereas the arm-specific expert can retain
a direction that is strongly outcome-relevant in only one arm.

O$\Phi$-ACE deliberately removes the outcome-utility term, so $S_Y=0$, and
constructs $S_B$ from treated--control mean and covariance differences and the
propensity-predictive direction. Its utility is therefore overlap-weighted
covariate variation alone. In contrast to \cref{eq:phi-augmented-map}, it uses
the replacement map
\begin{equation}
\Phi^O(X)=\operatorname{Std}_{\mathrm{fit}}
\!\left\{X^{\mathrm{std}}V_k^O\right\},
\label{eq:ophi-replacement-map}
\end{equation}
and supplies this lower-dimensional representation only to the structured
anchor; the neural correction continues to receive $X^{\mathrm{std}}$. This
outcome-free path is intended to reduce reliance on unstable raw anchor
directions while remaining distinct from both loss reweighting and the two
outcome-guided constructions.

The overlap weights, trace normalization, empirical matrices, and whitening
operations are specified in \cref{app:geometry-matrix-construction}.
None of the constructions uses validation outcomes, test outcomes, or
simulated treatment-effect truth.

\begin{table*}[!tbp]
\centering
\caption{Compact comparison of the five causal experts. All experts use the
same anchor--correction architecture, with separately fitted parameters.
Each correction head also receives its expert-specific anchor prior
$r^{(j)}(x)$.}
\label{tab:compact-expert-comparison}
\scriptsize
\setlength{\tabcolsep}{3.2pt}
\renewcommand{\arraystretch}{1.22}
\begingroup
\renewcommand{\tabularxcolumn}[1]{>{\centering\arraybackslash}m{#1}}
\begin{tabularx}{\textwidth}{
|>{\centering\arraybackslash}m{1.35cm}
|>{\centering\arraybackslash}m{2.55cm}
|>{\centering\arraybackslash}m{2.55cm}
|>{\centering\arraybackslash}m{2.30cm}
|X|
}
\hline
\textbf{Expert} &
\shortstack[c]{\textbf{Anchor}\\\textbf{input}} &
\shortstack[c]{\textbf{Representation}\\\textbf{network input}} &
\shortstack[c]{\textbf{Factual-loss}\\\textbf{weighting}} &
\textbf{Distinctive mechanism} \\
\hline
ACE & $X^{\mathrm{std}}$ & $X^{\mathrm{std}}$ & Arm-frequency &
Unmodified structured anchor used as the reference expert \\
\hline
OW-ACE & $X^{\mathrm{std}}$ & $X^{\mathrm{std}}$ & Overlap &
Emphasizes observations for which both treatments are empirically plausible \\
\hline
G$\Phi$-ACE & $\Phi^{G}(X^{\mathrm{std}})$ & $X^{\mathrm{std}}$ &
Arm-frequency & Pooled outcome utility, covariance-scaled mean-separation
penalty, and ZCA whitening \\
\hline
A$\Phi$-ACE & $\Phi^{A}(X^{\mathrm{std}})$ & $X^{\mathrm{std}}$ &
Arm-frequency & Arm-specific outcome utility with mean, covariance, and
propensity-direction penalties \\
\hline
O$\Phi$-ACE & $\Phi^{O}(X^{\mathrm{std}})$ & $X^{\mathrm{std}}$ &
Arm-frequency & Outcome-free overlap utility with mean, covariance, and
propensity-direction penalties; projected coordinates replace the anchor input \\
\hline
\end{tabularx}
\endgroup
\end{table*}

The compact distinctions among the experts are summarized in
\cref{tab:compact-expert-comparison}, while the complete
expert-to-ensemble prediction pathway is shown in
\cref{fig:vw-cee-five-expert-pipeline}. The structured anchor is the
fitting-only, low-dimensional arm-specific nonlinear ridge model defined in
\cref{eq:structured-anchor-prior}; it supplies $a_0^{(j)}(x)$,
$a_1^{(j)}(x)$, $z_1^{(j)}(x)$, and $z_2^{(j)}(x)$. ACE and OW-ACE calculate
these quantities from the original standardized covariates, whereas
G$\Phi$-ACE, A$\Phi$-ACE, and O$\Phi$-ACE first construct $\Phi^G(x)$,
$\Phi^A(x)$, and $\Phi^O(x)$ using
\cref{eq:phi-generalized-eigenproblem,eq:phi-augmented-map,eq:ophi-replacement-map}. G$\Phi$-ACE uses
pooled outcome utility, covariance-scaled mean imbalance, and ZCA whitening;
A$\Phi$-ACE instead uses arm-specific outcome utility and separate mean,
covariance, and propensity-direction penalties. O$\Phi$-ACE uses the same
three imbalance penalties without outcome utility or concatenation. Their empirical matrices are
specified in the supplementary material. For every expert, the prior
vector $r^{(j)}(x)=[a_0^{(j)},a_1^{(j)},a_1^{(j)}-a_0^{(j)},z_1^{(j)},
z_2^{(j)}]$ is concatenated with the shared representation as in
\cref{eq:correction-heads}; the resulting corrections yield
$\widehat\mu_{0j}(x)$ and $\widehat\mu_{1j}(x)$ through
\cref{eq:continuous-anchor-correction,eq:binary-anchor-correction}, and hence
$\widehat\tau_j(x)=\widehat\mu_{1j}(x)-\widehat\mu_{0j}(x)$. OW-ACE differs
from ACE only through \cref{eq:overlap-training-weight}. Finally, the
validation-only simplex weights are frozen and the five effects are aggregated
according to \cref{eq:ensemble-tau}, without using held-out test outcomes.

\subsection{Validation-Based Ensemble Weighting}
\label{sec:validation-weight-learning}

After the experts have been fitted, their seed-averaged predictions are
evaluated on the internal validation subset. Let
$n_v=|\mathcal{I}_{\mathrm{val}}|$ and define the column vector
$\widehat{\bm\mu}_{tj}^{\mathrm{val}}=
(\widehat\mu_{tj}(X_i))_{i\in\mathcal I_{\mathrm{val}}}\in\mathbb R^{n_v}$.
Then
\begin{equation}
M_t=\left[\widehat{\bm\mu}_{t1}^{\mathrm{val}},\ldots,
          \widehat{\bm\mu}_{tK}^{\mathrm{val}}\right]
\in\mathbb{R}^{n_v\times K},
\qquad
H=M_1-M_0.
\label{eq:validation-prediction-matrices}
\end{equation}
In \cref{eq:validation-prediction-matrices}, column $j$ of $M_t$ contains
expert $j$'s validation predictions under treatment state $t$, $K=5$ is the
number of experts, and column $j$ of $H$ is the corresponding CATE prediction.
Three prespecified weighting rules map these observable validation quantities
to the probability simplex $\Delta_K$ defined in \cref{eq:weight-simplex}.
None uses test information.

\begin{figure}[!tbp]
    \centering
    \includegraphics[width=0.82\textwidth,height=0.42\textheight,keepaspectratio]{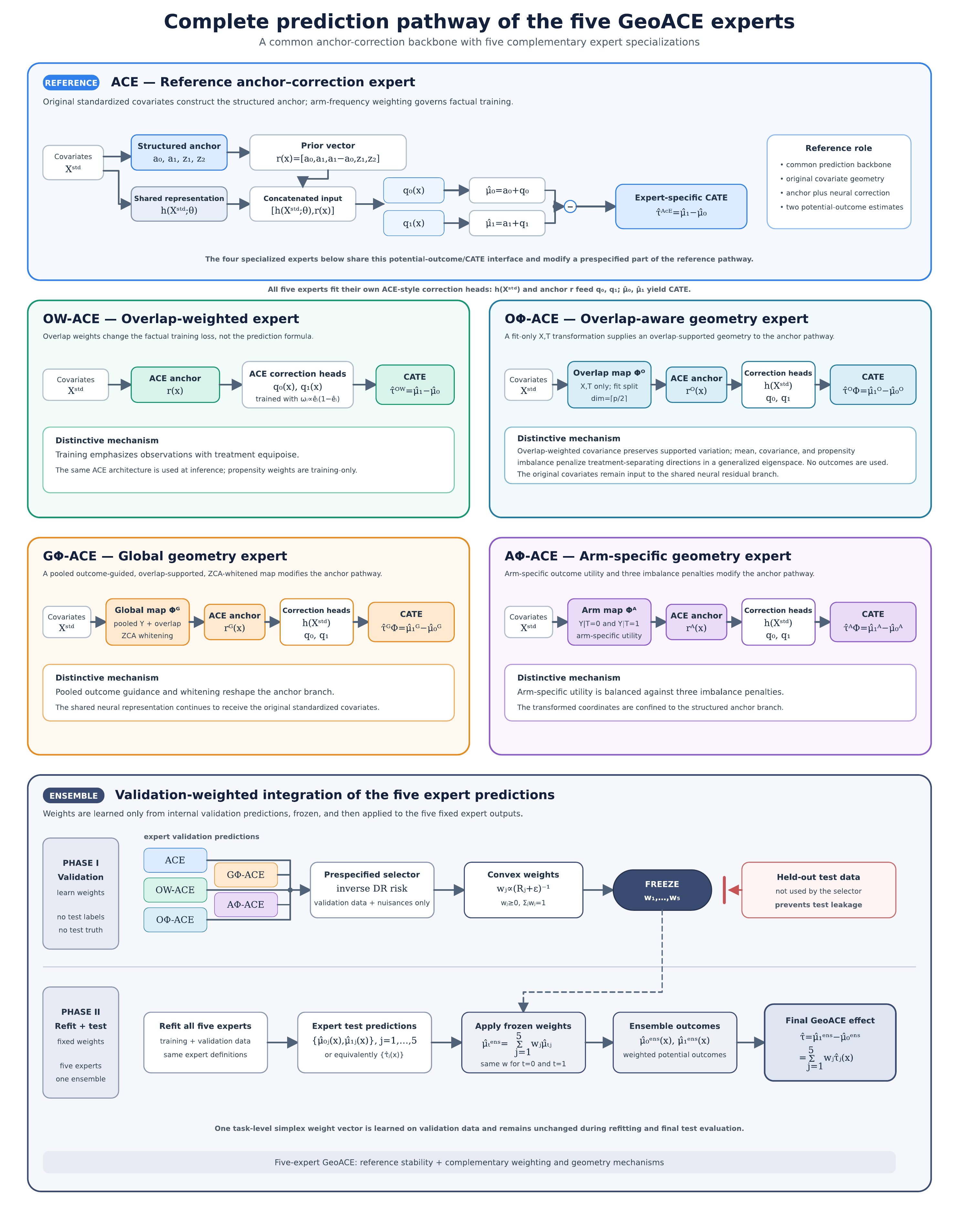}
    \caption{Prediction pathways of the five GeoACE experts and their
    validation-weighted aggregation. The structured-anchor quantities and
    expert-specific differences are defined in the accompanying text.}
    \label{fig:vw-cee-five-expert-pipeline}
\end{figure}

\subsubsection{Inverse Doubly Robust Risk Weighting}

The second rule uses a doubly robust (DR) pseudo-outcome. Let
$\widetilde\mu_0(x)$, $\widetilde\mu_1(x)$, and $\widetilde e(x)$ denote
prespecified nuisance predictions supplied by ACE. ACE is used as the common
nuisance provider because it is the unmodified reference expert: its nuisance
estimates are not altered by overlap reweighting or outcome-guided geometry.
Using one common provider also prevents an expert from being scored against a
pseudo-outcome constructed from its own expert-specific nuisance estimates.
To stabilize inverse propensity factors, define
$\widetilde e_c(x)=\min\{1-c,\max(c,\widetilde e(x))\}$ for a fixed
$c\in(0,1/2)$. We use $c=0.025$ for construction of the selector
pseudo-outcome. The validation pseudo-outcome is
\begin{align}
\psi_i^{\mathrm{DR}}
={}&\widetilde\mu_1(X_i)-\widetilde\mu_0(X_i)
+\frac{T_i\{Y_i-\widetilde\mu_1(X_i)\}}
       {\widetilde e_c(X_i)} \nonumber\\
&-\frac{(1-T_i)\{Y_i-\widetilde\mu_0(X_i)\}}
        {1-\widetilde e_c(X_i)}.
\label{eq:dr-pseudo-outcome}
\end{align}
In \cref{eq:dr-pseudo-outcome}, the first term is the nuisance outcome
contrast and the remaining two terms correct its treated and control
residuals. Each expert is scored by
\begin{equation}
R_j^{\mathrm{DR}}
=\left[
\frac{1}{n_v}\sum_{i\in\mathcal{I}_{\mathrm{val}}}
\{\psi_i^{\mathrm{DR}}-\widehat\tau_j(X_i)\}^2
\right]^{1/2}.
\label{eq:inverse-dr-risk}
\end{equation}
The inverse-DR weights are then
\begin{equation}
w_j^{\mathrm{DR}}=
\frac{(R_j^{\mathrm{DR}}+\epsilon)^{-a}}
     {\sum_{\ell=1}^{K}(R_\ell^{\mathrm{DR}}+\epsilon)^{-a}}.
\label{eq:inverse-dr-weights}
\end{equation}
Thus $R_j^{\mathrm{DR}}$ is the DR root mean squared error, matching the
canonical implementation with $a=1$ and $\epsilon=10^{-8}$. Because
\cref{eq:inverse-dr-risk} assesses the treatment-effect contrast
rather than factual outcomes alone, the rule in
\cref{eq:inverse-dr-weights} is the prespecified primary selector. Propensity
clipping limits the influence of near-deterministic assignments, although it
also makes the finite-sample pseudo-outcome a stabilized approximation to the
unclipped DR construction.

The primary rule uses the inverse-DR weights in
\cref{eq:inverse-dr-weights}. We also report the prespecified inverse factual
RMSE and simplex-constrained DR ridge rules; their definitions and fixed
settings appear in \cref{app:alternative-selectors}. These alternatives are
evaluated without changing expert training or choosing a rule after viewing
test performance.

\subsection{Selection, Refit, and Evaluation}
\label{sec:selection-refit-protocol}

\Cref{alg:vw-cee} summarizes the leakage-safe workflow for one benchmark task.
The fitting subset estimates expert-specific components, the validation subset
selects training duration and task-level ensemble weights, and the test set is
used only after all choices have been fixed.

\begin{algorithm}[!tbp]
\caption{GeoACE for one benchmark task (primary inverse-DR selector).}
\label{alg:vw-cee}
\small
\begin{tabularx}{\linewidth}{@{}r@{\hspace{0.6em}}X@{}}
\textbf{Input:} & Development data $\mathcal D_{\mathrm{fit}}\cup
  \mathcal D_{\mathrm{val}}$, untouched test covariates, fixed experts
  $\mathcal E$ ($K=5$), and prespecified seeds $b$.\\[2pt]
\textbf{1.} & On $\mathcal D_{\mathrm{fit}}$ fit preprocessing and, for each
  $(j,b)$, the expert's anchor, geometry when applicable, and correction
  network; select its training duration $m_{jb}$ using
  $\mathcal D_{\mathrm{val}}$.\\
\textbf{2.} & Average each expert's validation potential-outcome predictions
  over seeds; construct $M_0,M_1$ and $H=M_1-M_0$ as in
  \cref{eq:validation-prediction-matrices}.\\
\textbf{3.} & Form the validation DR pseudo-outcome from the fixed ACE
  nuisance provider (\cref{eq:dr-pseudo-outcome}); compute each expert's
  validation DR risk and the simplex weights $\widehat w^{\mathrm{DR}}$
  (\cref{eq:inverse-dr-risk,eq:inverse-dr-weights}).\\
\textbf{4.} & Freeze $\{m_{jb}\}$ and $\widehat w^{\mathrm{DR}}$; refit each
  expert, including its preprocessing, anchor and geometry, on the complete
  development data for exactly $m_{jb}$ steps.\\
\textbf{5.} & Average each refitted expert's test predictions over seeds;
  apply the frozen weights to both potential outcomes and take their
  difference (\cref{eq:ensemble-mu,eq:ensemble-tau}).\\[2pt]
\textbf{Output:} & Locked test potential-outcome and CATE predictions;
  evaluate once after prediction.\\
\end{tabularx}
\end{algorithm}

The validation subset therefore has two roles: selecting each expert--seed
training duration and estimating the ensemble weights. Refitting then uses all
development observations while keeping both decisions fixed. Test outcomes and
simulated counterfactual truth cannot alter the expert definitions, training
durations, or ensemble composition. This separation preserves the benefit of
refitting without reusing held-out information for selection.

The convex combination satisfies a Jensen bound relative to the weighted
average of expert CATE risks, but need not outperform its best constituent.
The DR validation criterion has a population-level CATE-risk interpretation
under the stated nuisance conditions; finite-sample clipping, nuisance
estimation and reuse of validation observations limit this interpretation.
The bound, derivation and identification limits are given in
\cref{app:method-properties}.

\section{Experimental Setup}
\label{sec:experiments}

\subsection{Datasets}
\label{sec:datasets}

We evaluate all estimators on the eight fixed benchmark protocols summarized
in \cref{tab:benchmark-summary}.  The collection varies sample size,
dimensionality, outcome type, treatment prevalence, overlap, and response
complexity, and includes both semi-synthetic settings with individual-effect
truth and an experimental benchmark for which individual counterfactuals are
unavailable.  This breadth is intended to test whether conclusions persist
across data-generating regimes rather than within a single favorable
simulation design.

The three IHDP protocols combine covariates from the Infant Health and
Development Program with simulated potential outcomes
\citep{hill2011bayesian,shalit2017estimating}.  IHDPA and IHDPB are the two
100-replication constructions distributed with TEDVAE
\citep{zhang2021tedvae,zhang2021tedvaeCode}; IHDP100 is the widely used
100-replication archive distributed by Johansson
\citep{johansson2016learning,johanssonDataArchive}.  They share the same
nominal dimensions but differ in outcome simulation and supplied partitions,
and are therefore analyzed as distinct protocols.

NEWS uses real document covariates and simulated treatment responses
\citep{johansson2016learning}. The analysis uses the sparse 3,477-variable
representation. TWINS uses the low-birth-weight cohort derived from twin-birth
records and binary mortality potential outcomes
\citep{almond2005costs,louizos2017cevae}.  JOBS combines experimental and
observational job-training records derived from the National Supported Work
study \citep{lalonde1986evaluating}.  Because JOBS lacks individual
counterfactual labels, it is assessed using treatment-on-the-treated and
policy criteria rather than PEHE \citep{shalit2017estimating}.

The ACIC benchmarks place simulated response surfaces over fixed real-world
covariates. For ACIC2016, we use the complete 58-variable representation for
all 77 official conditions and the same five fixed replications
($R01$, $R02$, $R05$, $R08$, and $R10$) in every condition, giving 385 tasks
\citep{dorie2019automated}.  For ACIC2017, we retain the \texttt{iid},
\texttt{group\_corr}, and \texttt{nonadditive} families, with eight designs
and 20 fixed replications per design, giving 480 tasks.  The
\texttt{heteroskedastic} family is excluded because the challenge
documentation reports that its treatment-effect truth was computed using the
wrong data-generating formula \citep{hahn2019acic2017}; this exclusion was
made independently of estimator performance.

Within every task, all methods receive the same fitting, validation, and test
indices and the same feature view. Within each retained task, partitioning
uses row identifiers and treatment assignments, without outcomes or
treatment-effect truth. Parameters for any
required imputation, scaling, or dimensionality reduction are estimated from
the fitting subset only and then applied unchanged to validation and test
data.  Full provenance, replication-selection rules, split construction, and
preprocessing boundaries are documented in the supplementary material.

\begin{table*}[!tbp]
\centering
\caption{Summary of the eight main benchmark protocols.  Split sizes are
reported as fitting/validation/test counts.  Sources identify the exact data
archives used in this study; further construction details appear in the
supplementary material.}
\label{tab:benchmark-summary}
\scriptsize
\setlength{\tabcolsep}{2.7pt}
\renewcommand{\arraystretch}{1.14}
\begin{tabularx}{\textwidth}{@{}lrrr p{2.45cm} p{2.65cm} X@{}}
\toprule
\textbf{Benchmark} & \textbf{$n$} & \textbf{$p$} & \textbf{Tasks} &
\textbf{Fit/val/test} & \textbf{Data source} & \textbf{Evaluation information} \\
\midrule
IHDPA & 747 & 25 & 100 & 449/224/74 & TEDVAE archive \citep{zhang2021tedvaeCode} & Continuous; semi-synthetic ITE truth \\
IHDPB & 747 & 25 & 100 & 449/224/74 & TEDVAE archive \citep{zhang2021tedvaeCode} & Continuous; semi-synthetic ITE truth \\
IHDP100 & 747 & 25 & 100 & 470/202/75 & Johansson archive \citep{johanssonDataArchive} & Continuous; semi-synthetic ITE truth \\
NEWS & 5,000 & 3,477 & 50 & 3,000/1,500/500 & Johansson archive \citep{johanssonDataArchive} & Continuous; semi-synthetic ITE truth \\
TWINS & 11,984 & 50 & 10 & 7,190/3,595/1,199 & Twin-birth benchmark \citep{louizos2017cevae} & Binary; paired potential outcomes \\
JOBS & 3,212 & 17 & 10 & 1,799/771/642 & Johansson archive \citep{johanssonDataArchive} & Binary; ATT and policy evaluation \\
ACIC2017 & 4,302 & 58 & 480 & 2,581/1,291/430 & ACIC2017 challenge \citep{hahn2019acic2017} & Continuous; conditional-effect truth \\
ACIC2016-FULL58 & 4,802 & 58 & 385 & 2,881/1,441/480 & ACIC2016 challenge \citep{dorie2019automated} & Continuous; conditional-effect truth \\
\bottomrule
\end{tabularx}
\end{table*}

\subsection{Baselines}
\label{sec:baselines}

The comparison set contains ten established estimator families representing
distinct approaches to heterogeneous treatment-effect estimation. We include the
S-, T-, and X-learners to cover canonical meta-learning strategies with
different degrees of information sharing between treatment arms
\citep{kunzel2019metalearners}; BART as a flexible Bayesian response-surface
model \citep{chipman2010bart,hill2011bayesian}; and CausalForestDML as an
orthogonal forest estimator that combines nuisance residualization with
forest-based heterogeneity modeling
\citep{chernozhukov2018double,athey2019generalized,oprescu2019orthogonal}.
The neural baselines comprise TARNet, CFRNet with maximum mean discrepancy
(MMD) and Wasserstein balancing penalties \citep{shalit2017estimating},
TEDVAE \citep{zhang2021tedvae}, SubgroupTE \citep{lee2025subgroupte}, and
PairNet \citep{nagalapatti2024pairnet}.  CFRNet-MMD and CFRNet-WASS are
reported separately in the numerical results because their discrepancy
penalties define different fitted estimators, yielding eleven reported
comparator variants in total. Implementation provenance and frozen tuning
rules are documented in the supplementary material.

This set is deliberately heterogeneous: it tests the proposed ensemble
against outcome-modeling, meta-learning, orthogonalization, representation
balancing, latent-variable, subgroup, and pairwise objectives.  The purpose is
not to tune each comparator against test truth, but to compare prespecified
estimators under a common information boundary.  Within a benchmark task,
all methods use the same fitting, validation, and held-out test partition.
Validation data may support model selection permitted by a method, whereas
test outcomes and simulated counterfactual truth remain unavailable until
predictions have been fixed.  The main tables report held-out performance;
training-sample quantities are retained only as diagnostics and are not used
to establish comparative superiority.

\subsection{Evaluation Metrics}
\label{sec:metrics}

For benchmarks with individual treatment-effect truth, the primary metric is
the square root of the precision in estimation of heterogeneous effect
(\(\sqrt{\mathrm{PEHE}}\)) \citep{hill2011bayesian,shalit2017estimating}:
\begin{equation}
\sqrt{\mathrm{PEHE}}
=\left[\frac{1}{n_{\mathrm{te}}}
\sum_{i\in\mathcal{I}_{\mathrm{te}}}
\left\{\widehat\tau(X_i)-\tau_i\right\}^{2}\right]^{1/2}.
\label{eq:metric-sqrt-pehe}
\end{equation}
In \cref{eq:metric-sqrt-pehe}, \(n_{\mathrm{te}}\) is the test-set size,
\(\widehat\tau(X_i)\) is the estimated effect, and \(\tau_i\) denotes the
available individual effect truth (or the benchmark's corresponding
conditional-mean contrast).  Lower values indicate more accurate recovery of
effect heterogeneity.  Reporting the square root keeps the error on the same
scale as the outcome and prevents the larger numerical scale of PEHE from
obscuring interpretation.

We additionally report absolute error in the average treatment effect:
\begin{equation}
\epsilon_{\mathrm{ATE}}
=\left|\widehat{\mathrm{ATE}}_{\mathrm{te}}-\mathrm{ATE}_{\mathrm{te}}\right|,
\qquad
\widehat{\mathrm{ATE}}_{\mathrm{te}}
=\frac{1}{n_{\mathrm{te}}}
\sum_{i\in\mathcal{I}_{\mathrm{te}}}\widehat\tau(X_i),
\qquad
\mathrm{ATE}_{\mathrm{te}}
=\frac{1}{n_{\mathrm{te}}}
\sum_{i\in\mathcal{I}_{\mathrm{te}}}\tau_i.
\label{eq:metric-ate}
\end{equation}
Thus, both quantities refer to the same finite test population. The ATE
criterion in \cref{eq:metric-ate} measures calibration of the average
effect but does not assess whether heterogeneity is recovered; it is therefore
interpreted jointly with \(\sqrt{\mathrm{PEHE}}\), not as a substitute for it.
Potential-outcome RMSEs and factual-outcome RMSE are retained as secondary
diagnostics.  For the binary TWINS outcome, factual Brier score and
thresholded classification error provide additional calibration information.
These outcome-prediction measures are not primary CATE criteria because a
model can predict observed outcomes accurately while estimating the
unobserved treatment contrast poorly.

For JOBS, unit-level counterfactual truth is unavailable; we therefore
evaluate absolute ATT error against the experimental treatment contrast and
policy risk on the randomized test subset. The exact estimators, including
their treatment-policy cell convention, appear in the supplementary material
(Section~\emph{JOBS evaluation definitions}). JOBS is excluded from PEHE-based ranks and
tests.

Metrics are first calculated separately for every fixed replication or ACIC
task and then summarized across tasks by their mean and standard deviation.
All main comparisons use held-out predictions.  Because task difficulty can
vary markedly, especially across ACIC settings, the empirical analysis also
examines paired task-level differences rather than drawing conclusions solely
from pooled averages.

\subsection{Implementation Details}
\label{sec:implementation}

The experimental protocol follows the information boundary defined in
\cref{sec:data-information-boundary}.  For every task, preprocessing
statistics and any model-specific nuisance quantities are estimated without
using the held-out test outcomes.  Hyperparameters, random seeds, stopping
rules, and the set of candidate ensemble selectors are fixed by configuration
before final evaluation.  Stochastic predictions are aggregated according to
the prespecified seed protocol, after which GeoACE learns one task-level
simplex weight vector from validation predictions.  The weights and selected
training durations are then frozen; the experts are refitted on the complete
development sample and evaluated once on the held-out test set as described
in \cref{sec:selection-refit-protocol}.

To support auditability, each run records its configuration identifiers,
expert order, selected durations, ensemble weights, nuisance settings, and
train, validation, and test predictions.  Predictions are persisted before
test truth is accessed.  The same benchmark-specific evaluator is subsequently
applied to every comparator and to each proposed expert and ensemble selector.
This arrangement makes numerical differences traceable to fitted estimators
rather than to changes in splits or metric definitions. Comparator provenance,
frozen hyperparameters, and geometry settings are recorded in the supplementary
material; geometry-matrix construction is given in
\cref{app:geometry-matrix-construction}.


\section{Results}
\label{sec:results}

\subsection{Performance of the Individual Experts and Their Ensemble}

\Cref{tab:expert-main-results} reports the prespecified test metric for every
individual expert and for the validation-weighted five-expert ensemble. The
five experts are not ordered consistently across benchmarks. O$\Phi$-ACE is
the strongest individual expert on ACIC2016, NEWS, TWINS, and JOBS, OW-ACE is
strongest on IHDP100 and IHDPA, and ACE is strongest on IHDPB. This variation
is the empirical premise of the ensemble: no individual construction provides
the best fit to all regimes.

The ensemble improves on every individual expert on six of the seven
$\sqrt{\mathrm{PEHE}}$ benchmarks. TWINS is the only exception, where
O$\Phi$-ACE alone is marginally better (0.3183 versus 0.3186). On JOBS, where
individual-effect truth is unavailable, O$\Phi$-ACE and the ensemble have
nearly identical policy risk. The result is therefore not driven by a single
dominant expert; it reflects benchmark-dependent combinations of predictions.

\begin{table*}[!tbp]
\centering
\caption{Test performance of the five individual experts and GeoACE. Lower is
better. The metric is $\sqrt{\mathrm{PEHE}}$ except for JOBS, where policy risk
is reported. Bold denotes the best value in each row.}
\label{tab:expert-main-results}
\footnotesize
\setlength{\tabcolsep}{5.0pt}
\begin{tabular}{lrrrrrr}
\toprule
\textbf{Benchmark} & \textbf{ACE} & \textbf{OW-ACE} &
\textbf{G$\Phi$-ACE} & \textbf{A$\Phi$-ACE} &
\textbf{O$\Phi$-ACE} & \textbf{GeoACE} \\
\midrule
ACIC2016 & 2.0232 & 1.9706 & 1.9904 & 1.9742 & 1.9203 & \textbf{1.7518} \\
ACIC2017 & 0.8951 & 0.8901 & 0.8721 & 0.8692 & 0.9527 & \textbf{0.7990} \\
IHDP100  & 0.7332 & 0.7042 & 0.7106 & 0.7104 & 0.8139 & \textbf{0.6742} \\
IHDPA    & 0.5569 & 0.5469 & 0.5558 & 0.5671 & 0.5858 & \textbf{0.5195} \\
IHDPB    & 2.1948 & 2.2271 & 2.2215 & 2.2209 & 2.1968 & \textbf{2.0963} \\
NEWS     & 1.7690 & 1.8055 & 1.7614 & 1.7428 & 1.6766 & \textbf{1.6707} \\
TWINS    & 0.3195 & 0.3196 & 0.3231 & 0.3232 & \textbf{0.3183} & 0.3186 \\
JOBS$^\dagger$ & 0.24127 & 0.25926 & 0.24077 & 0.24838 &
0.23762 & \textbf{0.23758} \\
\bottomrule
\end{tabular}

\vspace{1mm}
\parbox{0.96\textwidth}{\scriptsize $^\dagger$JOBS uses policy risk and is
not included in $\sqrt{\mathrm{PEHE}}$ rank analyses. Values are task-level
means from the locked test predictions.}
\end{table*}

\subsection{Comparison with Established Estimators}

\Cref{tab:benchmark-comparison} condenses the complete 12-method comparison by
showing GeoACE, the strongest comparator on each benchmark, and the rank of
GeoACE among all evaluated methods. The complete method-by-benchmark matrix is
reported in Appendix~\ref{app:additional-results}. GeoACE ranks first on all
three IHDP protocols. On NEWS it ranks second, only 0.00220 behind PairNet
(0.13\% relative). Paired benchmark-level analyses place GeoACE in the same
statistical group as the NEWS leader. The method is less competitive on the
two ACIC protocols and on TWINS, demonstrating that the favorable overall
ranking does not imply uniform dominance.

\begin{table}[!tbp]
\centering
\caption{GeoACE relative to the strongest comparator on each
$\sqrt{\mathrm{PEHE}}$ benchmark. Lower values and ranks are better.}
\label{tab:benchmark-comparison}
\footnotesize
\setlength{\tabcolsep}{4.2pt}
\begin{tabular}{lrrlr}
\toprule
\textbf{Benchmark} & \textbf{GeoACE} & \textbf{Best comparator} &
\textbf{Method} & \textbf{Rank} \\
\midrule
ACIC2016 & 1.7518 & 1.1418 & BART & 8 \\
ACIC2017 & 0.7990 & 0.4265 & CF-DML & 6 \\
IHDP100  & \textbf{0.6742} & 0.9883 & CFRNet-MMD & \textbf{1} \\
IHDPA    & \textbf{0.5195} & 0.6502 & TEDVAE & \textbf{1} \\
IHDPB    & \textbf{2.0963} & 2.2298 & TEDVAE & \textbf{1} \\
NEWS     & 1.6707 & \textbf{1.6685} & PairNet & 2 \\
TWINS    & 0.3186 & 0.3113 & CF-DML & 7 \\
\bottomrule
\end{tabular}
\end{table}

Across these seven benchmarks, GeoACE obtains the lowest observed average rank
(3.714) among the 12 methods. The benchmark-level Friedman test does not reject
equivalent performance ($p=0.328$, Kendall's $W=0.162$), and the
Iman--Davenport correction leads to the same conclusion ($p=0.330$).
Accordingly, the average-rank result is descriptive; it does not establish
statistically significant overall superiority. This distinction is important
because the number of independent benchmarks is small relative to the number
of compared methods and the ordering varies substantially across datasets.
The testing hierarchy follows standard recommendations for comparing multiple
learning algorithms across datasets
\citep{demsar2006statistical,iman1980approximations}.

\subsection{Policy Learning on JOBS}

JOBS is analyzed separately because its experimentally supported target is
policy value rather than unit-level PEHE. GeoACE attains mean policy risk
0.23758 (policy value 0.76242) and absolute ATT error 0.08112. The four-expert
ensemble has policy risk 0.23755, so adding O$\Phi$-ACE is practically neutral.
Across the valid comparator outputs, the benchmark-level Friedman test is also
non-significant ($p=0.556$, Kendall's $W=0.087$). We therefore interpret JOBS
as evidence of comparable policy performance rather than an improvement claim.


\section{Additional Analysis}
\label{sec:analysis}

\subsection{Contribution of \texorpdfstring{O$\Phi$-ACE}{O-Phi-ACE}}

\Cref{tab:four-five-ablation} compares the original four-expert ensemble with
the otherwise identical five-expert system. The fifth expert reduces mean
$\sqrt{\mathrm{PEHE}}$ on all seven compatible benchmarks and wins 998 of
1,225 paired tasks. Gains are largest on ACIC2016 and NEWS and remain positive
on both ACIC2017 and all three IHDP protocols. This pattern is notable because
O$\Phi$-ACE is not itself the strongest expert on every dataset: its main value
is to supply a prediction geometry that the validation selector can use when
helpful.

\begin{table*}[!tbp]
\centering
\caption{Paired ablation of the fifth expert. $\Delta$ is five-expert minus
four-expert performance, so negative values favor the five-expert system.}
\label{tab:four-five-ablation}
\footnotesize
\setlength{\tabcolsep}{5pt}
\begin{tabular}{lrrrrr}
\toprule
\textbf{Benchmark} & \textbf{Four experts} & \textbf{Five experts} &
$\boldsymbol{\Delta}$ & \textbf{Relative change} & \textbf{Five-expert wins} \\
\midrule
ACIC2016 & 1.8188 & 1.7518 & -0.0670 & -3.69\% & 326/385 \\
ACIC2017 & 0.8196 & 0.7990 & -0.0206 & -2.51\% & 380/480 \\
IHDP100  & 0.6854 & 0.6742 & -0.0112 & -1.63\% & 83/100 \\
IHDPA    & 0.5351 & 0.5195 & -0.0156 & -2.92\% & 71/100 \\
IHDPB    & 2.1371 & 2.0963 & -0.0409 & -1.91\% & 81/100 \\
NEWS     & 1.7285 & 1.6707 & -0.0579 & -3.35\% & 49/50 \\
TWINS    & 0.3204 & 0.3186 & -0.0018 & -0.57\% & 8/10 \\
JOBS$^\dagger$ & 0.237553 & 0.237584 & +0.000031 & +0.01\% & 5/10 \\
\bottomrule
\end{tabular}
\end{table*}

The secondary metrics qualify this improvement. On TWINS and ACIC2017, the
fifth expert improves PEHE but can worsen mean-effect calibration, illustrating
that better recovery of heterogeneity need not imply better ATE calibration.
The primary ablation in \cref{tab:four-five-ablation} therefore concerns PEHE
(policy risk on JOBS); it does not establish gains on every secondary metric.

\subsection{Leave-One-Expert-Out Sensitivity}

We next remove each expert in turn and re-estimate the convex weights from the
same internal-validation predictions, without retraining any expert. The ACE
nuisance predictions used to construct the DR validation target are held fixed
in every contrast, including the ACE-removal condition. Consequently, this
analysis measures the conditional contribution of an expert's prediction
column while preserving the selector's definition and test firewall.

The resulting pattern is deliberately not summarized as five uniformly
positive main effects. Removing O$\Phi$-ACE increases mean
$\sqrt{\mathrm{PEHE}}$ on all seven truth-available benchmarks: the relative
increase ranges from 0.57\% on TWINS to 3.83\% on ACIC2016 when expressed as
the ratio of the paired mean change to the full-ensemble mean. Its effect on
JOBS policy risk is essentially zero. OW-ACE is beneficial on ACIC2016,
ACIC2017, IHDP100, IHDPA, and IHDPB, but its removal improves NEWS. The
remaining experts have smaller, sign-changing effects across protocols; in
several settings, removing one slightly improves the mean result after the
other four weights are re-optimized. This heterogeneity supports the proposed
regime-dependent interpretation: GeoACE is a validation-frozen diversification
device, rather than a claim that every constituent must improve every dataset.
The benchmark-specific pattern is visualized in
\cref{fig:loeo-heatmap}; the complete numerical table and paired tests are
reported in the supplementary material.

\subsection{Aggregation-Rule Controls}

To distinguish the value of the expert library from the choice of aggregator,
we recombined the same five frozen prediction columns using equal weights,
winner-take-all DR selection, convex DR fitting, R-stacking, causal
Q-aggregation, and DR ridge shrinkage. No expert was retrained, and every
weight vector was fitted on internal validation observations and frozen before
test truth was opened. Relative to GeoACE, the benchmark-balanced excess loss
was 6.97\% for Best-DR, 3.69\% for Convex-DR, 3.30\% for R-stacking, and 4.90\%
for causal Q-aggregation; all four bootstrap intervals excluded zero. Equal-5
and Ridge-DR differed from GeoACE by only $-0.12$\% and $-0.09$\%, respectively,
with intervals spanning zero. Hence the controlled evidence favors smooth,
strongly regularized aggregation over hard selection or aggressively fitted
validation weights, but it does not show that inverse-DR weighting improves on
uniform averaging. Benchmark-level values, paired tests, bootstrap intervals,
and the corresponding figure appear in
Appendix~\ref{app:aggregation-controls}.

\subsection{Sensitivity of the Cross-Benchmark Ranking}

The primary rank analysis includes TWINS because it provides valid
individual-effect truth. To check whether its discrete effect structure drives
the conclusion, we repeat the descriptive analysis on the six regression-style
benchmarks after excluding TWINS and JOBS. GeoACE again has the lowest observed
average rank (3.167), but the Friedman and Iman--Davenport tests remain
non-significant ($p=0.255$ and $p=0.250$). Thus, removing TWINS improves the
numerical rank but does not change the inferential conclusion.


\section{Discussion}
\label{sec:discussion}

The clearest finding is about the expert library. Its members exchange rank
across datasets, the five-expert combination beats every constituent on six of
seven PEHE benchmarks, and removing O$\Phi$-ACE degrades all seven benchmark
means despite that expert not being the best standalone model everywhere.
This is the behavior expected from useful, imperfectly correlated inductive
biases. It is also regime-specific: GeoACE leads the IHDP protocols and nearly
matches the NEWS leader, whereas established estimators remain stronger on the
two ACIC protocols and TWINS.

The aggregation controls sharpen that interpretation. Inverse-DR weighting is
substantially more reliable than selecting one expert or fitting several less
regularized validation objectives, but it is effectively tied with Equal-5 and
Ridge-DR after balancing benchmarks. Thus, validation weighting supplies a
leakage-safe and interpretable combination rule; the present data do not show
that its small departures from uniform weights are intrinsically superior.
The non-significant omnibus comparison with established estimators leads to
the same restraint: the favorable average rank describes broad empirical
coverage, not statistical dominance over every alternative.

Several limitations remain. All observational conclusions depend on
consistency, conditional exchangeability, and positivity; expert diversity
cannot recover information absent because of unmeasured confounding or
structural non-overlap. Validation criteria are observable surrogates rather
than direct CATE risk. The task-level weights also cannot adapt to subregions
within one dataset, although this restriction reduces the variance and
leakage risk of a flexible gate. Finally, improved PEHE can coexist with worse
ATE or ATT calibration, as observed in selected TWINS and ACIC2017 analyses.

The principal computational cost arises from fitting five experts and
refitting them after weights and training durations are frozen. The selector
itself is inexpensive because it estimates only four free simplex parameters.
Geometry construction requires weighted covariance operators and a
generalized eigendecomposition; dense sample-by-sample dependence matrices
should be avoided when covariance-based equivalents are available. Expert
fits are parallelizable, but runtime, memory, and prediction storage remain
higher than for a single estimator. These costs should be weighed against the
observed stability gains rather than treated as an intrinsic advantage.

Future work could replace the fixed task-level simplex with a strongly
regularized regional gate, incorporate uncertainty in the learned weights,
and study transport to multi-valued or continuous treatments. External
validation on observational applications with randomized reference evidence
would also be valuable, especially for determining when overlap geometry is
preferable to outcome-guided geometry.


\section{Conclusion}
\label{sec:conclusion}

GeoACE offers a controlled way to preserve several causal inductive biases
without learning a high-variance individual-level gate. The outcome-free
O$\Phi$-ACE path adds measurable diversity, and the frozen validation protocol
makes every selection decision auditable before test truth is opened. Results
across eight protocols support the five-expert library as a robust empirical
design, especially on IHDP and NEWS. They do not establish universal
superiority, nor do they distinguish inverse-DR weighting from uniform or
ridge-shrunk averaging. This narrower conclusion is also the practically
useful one: when the appropriate geometry is unknown, controlled diversity and
a strict information firewall can be more dependable than committing to one
expert or tuning a flexible aggregator on a noisy causal surrogate.


\FloatBarrier


\bibliographystyle{plainnat}
\bibliography{references}

\clearpage
\appendix
\section*{Appendix}

\section{Validation Selectors Beyond the Primary Rule}
\label{app:alternative-selectors}

These two prespecified selectors use the same validation predictions as the
primary inverse-DR rule in \cref{eq:inverse-dr-weights}; neither uses test
outcomes or simulated treatment-effect truth. The final ensemble still uses
the inverse-DR weights unless an analysis explicitly names an alternative.

\subsection{Inverse Factual-RMSE Weighting}

For validation observation $i$ and expert $j$, define the factual prediction
\begin{equation}
\widehat y_{ij}^{F}
=T_i\widehat\mu_{1j}(X_i)+(1-T_i)\widehat\mu_{0j}(X_i).
\label{eq:factual-validation-prediction}
\end{equation}
The factual root mean squared error (RMSE) associated with
\cref{eq:factual-validation-prediction} is
\begin{equation}
R_j^{F}=\left\{\frac{1}{n_v}
\sum_{i\in\mathcal{I}_{\mathrm{val}}}
(Y_i-\widehat y_{ij}^{F})^2\right\}^{1/2}.
\label{eq:factual-rmse-selector-risk}
\end{equation}
The inverse-RMSE rule converts the score in
\cref{eq:factual-rmse-selector-risk} to
\begin{equation}
w_j^{F}=
\frac{(R_j^{F}+\epsilon)^{-a}}
     {\sum_{\ell=1}^{K}(R_\ell^{F}+\epsilon)^{-a}},
\qquad j=1,\ldots,K.
\label{eq:inverse-factual-weights}
\end{equation}
Here, $a>0$ is the inverse-power parameter and $\epsilon>0$ prevents numerical
singularities. In the experiments, these quantities are fixed at $a=1$ and
$\epsilon=10^{-8}$. The smooth normalization in
\cref{eq:inverse-factual-weights} avoids winner-take-all selection, which is
particularly useful when the validation sample is modest. Its limitation is
also clear: factual response accuracy need not imply accurate estimation of
the contrast between the two response surfaces.

\subsection{Simplex-Constrained Nonnegative DR Ridge}

The third rule estimates the expert weights jointly. Let
$\bm{\psi}^{\mathrm{DR}}=(\psi_1^{\mathrm{DR}},\ldots,
\psi_{n_v}^{\mathrm{DR}})^\top$ collect the validation pseudo-outcomes and let
$u=K^{-1}\bm{1}_K$ be the equal-weight vector. The estimator solves
\begin{equation}
\widehat w^{R}
=\argmin_{w\in\Delta_K}
\left\{
\frac{1}{n_v}\|\bm{\psi}^{\mathrm{DR}}-Hw\|_2^2
+\lambda_2\|w\|_2^2
+\lambda_u\|w-u\|_2^2
\right\}.
\label{eq:nonnegative-dr-ridge}
\end{equation}
In \cref{eq:nonnegative-dr-ridge}, $H$ is defined in
\cref{eq:validation-prediction-matrices}, $\lambda_2\geq0$ is the ridge
coefficient, and $\lambda_u\geq0$ controls shrinkage toward equal weighting.
The evaluated specification fixes $\lambda_2=0.01$ and $\lambda_u=0.01$.
The simplex constraint prevents sign cancellation and preserves the scale of
the expert predictions. Ridge regularization improves stability when expert
columns are strongly correlated, whereas shrinkage toward $u$ provides a
conservative fallback when validation data do not distinguish their risks
reliably. The resulting convex problem is solved to fixed convergence limits;
no test criterion is used to tune its solution.

\section{Mathematical Properties and Limits}
\label{app:method-properties}

GeoACE combines three forms of regularization. First, the anchor--correction
parameterization asks each neural expert to learn departures from a structured
reference rather than two unrestricted response surfaces. This can reduce
finite-sample variance when the anchor captures useful outcome structure,
although correlated anchor error remains a shared limitation. Second, the
$\Phi$ experts alter the anchor geometry by ranking directions according to
outcome- and overlap-supported variation relative to measured imbalance
(\cref{eq:phi-rayleigh-quotient}). They do not remove all treatment information
or manufacture common support. Third, validation distributes weight among
these controlled biases instead of assuming one is uniformly preferable.

\subsection{Convex Aggregation}
\label{sec:why-convex-aggregation}

Convex weighting preserves the potential-outcome interpretation and the CATE
identity in \cref{eq:ensemble-mu,eq:ensemble-tau}. It also gives the following
pointwise bound.

\begin{proposition}[Convex CATE error bound]
\label{prop:convex-cate-bound}
For any $w\in\Delta_K$ and any $x$,
\begin{equation}
\left\{\widehat\tau^{\mathrm{ens}}(x;w)-\tau(x)\right\}^2
\leq
\sum_{j=1}^{K}w_j
\left\{\widehat\tau_j(x)-\tau(x)\right\}^2.
\label{eq:convex-cate-bound}
\end{equation}
Consequently, ensemble CATE risk is no larger than the corresponding weighted
average of expert risks.
\end{proposition}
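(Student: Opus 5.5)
The argument is a direct application of Jensen's inequality (equivalently, the Cauchy--Schwarz inequality) to the convex function $s\mapsto s^2$. I will fix $w\in\Delta_K$ and $x$, and use the identity in \cref{eq:ensemble-tau} together with $\sum_j w_j=1$ from \cref{eq:weight-simplex}. This rewrites the ensemble error as a convex combination of the individual errors:
\[
\widehat\tau^{\mathrm{ens}}(x;w)-\tau(x)=\sum_{j=1}^{K}w_j\widehat\tau_j(x)-\sum_{j=1}^{K}w_j\tau(x)=\sum_{j=1}^{K}w_j\,d_j,\qquad d_j=\widehat\tau_j(x)-\tau(x).
\]
This is the only place where the unit-sum constraint is used, and it is essential: without it, $\tau(x)$ cannot be absorbed into the sum.

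Next I will bound $\bigl(\sum_j w_j d_j\bigr)^2\le\sum_j w_j d_j^2$. To keep the argument elementary rather than cite Jensen, I write $w_j d_j=\sqrt{w_j}\cdot\sqrt{w_j}d_j$, which is legitimate because $w_j\ge0$. Cauchy--Schwarz then gives
\[
\Bigl(\sum_j w_jd_j\Bigr)^2\le\Bigl(\sum_j w_j\Bigr)\Bigl(\sum_j w_jd_j^2\Bigr)=\sum_j w_jd_j^2.
\]
For completeness, I will also record the exact identity, which shows the size of the gain:
\[
\sum_j w_jd_j^2-\Bigl(\sum_j w_jd_j\Bigr)^2=\sum_j w_j\bigl(d_j-\bar d\bigr)^2\ge0,\qquad \bar d=\sum_j w_jd_j.
\]
The gap is therefore the $w$-weighted dispersion of the expert predictions at $x$. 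This connects the bound to the diversity motivation but is not needed for the inequality.

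For the risk statement, I will integrate the pointwise inequality against the distribution of $X$, or against the empirical test distribution for $\sqrt{\mathrm{PEHE}}$. Here $w$ must be fixed, i.e.\ not dependent on the point being evaluated, and this holds because the weights are frozen task-level constants. Linearity of expectation then yields $\mathcal R_\tau(\widehat\tau^{\mathrm{ens}})\le\sum_j w_j\mathcal R_\tau(\widehat\tau_j)$, with $\mathcal R_\tau$ as in \cref{eq:ideal-cate-risk}. If $w$ is data-dependent, I will state the result conditionally on the development data, so that $w$ is treated as fixed.

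There is no real obstacle. The only points needing care are that nonnegativity of the weights is required for the square-root split (or for Jensen to apply), and that the risk version requires $w$ not to vary with $x$. I will also note that the bound compares the ensemble with the weighted average of expert risks, not with the best expert, which is consistent with the paper's remark that the ensemble need not outperform its best constituent.
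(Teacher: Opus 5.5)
Your proof is correct and follows essentially the same route as the paper, which writes the ensemble error as $\sum_j w_j e_j(x)$ using the unit-sum constraint, applies Jensen's inequality to $s\mapsto s^2$ (your Cauchy--Schwarz split is the same step), and then takes expectation over $X$. Your weighted-dispersion identity for the gap and your remark that $w$ must not vary with $x$ are correct additions that the paper leaves implicit.
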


\begin{proof}
With $e_j(x)=\widehat\tau_j(x)-\tau(x)$, Jensen's inequality gives
$\{\sum_jw_je_j(x)\}^2\leq\sum_jw_je_j(x)^2$ because the weights are
nonnegative and sum to one. Taking expectation over $X$ proves the risk
statement.
\end{proof}

The result does not guarantee improvement over the best expert. Such an
improvement requires useful experts with imperfectly correlated errors; nearly
identical experts provide little diversification. Nonnegative simplex weights
also keep the ensemble within the experts' pointwise convex hull and prevent a
small validation sample from creating extreme effects through cancellation of
large positive and negative coefficients.

\subsection{Validation Surrogates and Fixed Refit}
\label{sec:why-validation-surrogates}

Factual RMSE is a stable but indirect selector: it detects poor calibration of
observed outcomes, yet a model can predict their common prognostic component
well while estimating their contrast poorly. The DR pseudo-outcome is more
closely aligned with CATE. At the population level and ignoring clipping, for candidate nuisance functions
$m_t(x)$ and $p(x)$,
\begin{equation}
\E\{\psi^{\mathrm{DR}}\mid X=x\}=\tau(x)
\label{eq:dr-conditional-unbiasedness}
\end{equation}
when either $p(x)=e(x)$ or both $m_t(x)=\mu_t(x)$ for $t\in\{0,1\}$, under the
identification assumptions. Under this conditional-mean property,
\begin{equation}
\E\!\left[\{\psi^{\mathrm{DR}}-f(X)\}^2\mid X\right]
=
\Var(\psi^{\mathrm{DR}}\mid X)
+\{\tau(X)-f(X)\}^2.
\label{eq:dr-risk-decomposition}
\end{equation}
The first term is independent of the candidate $f$, so population DR risk
targets CATE risk up to an additive noise term
\citep{chernozhukov2018double,kennedy2023towards}. This motivates the two
DR-based selectors, while factual weighting remains a lower-variance
comparison.

These arguments remain finite-sample qualifications rather than oracle
guarantees. Estimated nuisances, propensity clipping, weak overlap, and limited
validation size can make the pseudo-outcome noisy. In addition, the population
identity above does not by itself guarantee unbiased empirical validation risk
when estimated nuisances or checkpoint decisions reuse the same validation
observations; cross-fitting would impose a stronger independence condition.
GeoACE therefore uses only
one task-level simplex vector---four free parameters for five experts---and
regularizes the joint DR solution toward equal weighting. Freezing the weights
and selected durations before refitting prevents the selector from adapting to
test outcomes, although refit drift can make the pre-refit weights suboptimal.
For this reason, the empirical analysis retains individual experts and equal
weighting as explicit comparators.

\subsection{Scope}
\label{sec:theory-scope}

The preceding properties justify the design but do not establish universal
superiority. GeoACE can address task-level uncertainty over several controlled
inductive biases and can reduce risk when their errors are complementary. It
cannot identify effects under unmeasured confounding, recover unsupported
counterfactuals, guarantee improvement over the best expert, or vary expert
weights across covariate regions. These claims are therefore evaluated across
repeated benchmark tasks rather than assumed from the construction.

\section{Computational Construction of the Geometry Matrices}
\label{app:geometry-matrix-construction}

This section shows how overlap, outcome association, and measured treatment
imbalance become the matrices in \cref{eq:phi-generalized-eigenproblem}.
All quantities are estimated using only $\mathcal I_{\mathrm{fit}}$.
We first standardize the covariates with fitting-subset means and standard
deviations. A ridge-logistic model then estimates the treatment propensity
$\widehat e_i$ for each observation. To limit extreme values, define
$\widehat e_{ic}=\min\{0.97,\max(0.03,\widehat e_i)\}$.
Observations with a propensity nearer one half receive greater overlap weight:
\begin{equation}
s_i=\frac{\widehat e_{ic}(1-\widehat e_{ic})}
{n_f^{-1}\sum_{\ell\in\mathcal I_{\mathrm{fit}}}
\widehat e_{\ell c}(1-\widehat e_{\ell c})},
\qquad n_f=|\mathcal I_{\mathrm{fit}}|.
\label{eq:app-overlap-weights}
\end{equation}
The denominator makes the average weight on the fitting subset equal to one.

Below, $x_i\in\mathbb R^p$ is the standardized covariate column vector,
and sums without an explicit index range run over $\mathcal I_{\mathrm{fit}}$.
The overlap-weighted mean is
$\bar x_s=(\sum_i s_i x_i)/(\sum_i s_i)$, and the corresponding covariance is
\begin{equation}
\widehat{\operatorname{Cov}}_s(X)=
\frac{\sum_i s_i(x_i-\bar x_s)(x_i-\bar x_s)^\top}{\sum_i s_i}.
\label{eq:app-weighted-covariance}
\end{equation}
We measure the association with the observed scalar outcome $Y_i$ by
$\widehat{\operatorname{Cov}}_s(X,Y)=
(\sum_i s_i)^{-1}\sum_i s_i(x_i-\bar x_s)(Y_i-\bar Y_s)$, where
$\bar Y_s=(\sum_i s_iY_i)/(\sum_i s_i)$.
The notation $\widehat{\operatorname{Cov}}_s(X,Y\mid T=t)$ applies the same
calculation within treatment arm $t$, including arm-specific weighted means.
These covariances divide by the sum of weights, as in the implementation.

To make matrix components comparable in scale, we symmetrize each component
$A$ and apply
\begin{equation}
\mathcal N(A)=
\begin{cases}
pA/\operatorname{tr}(A), & \operatorname{tr}(A)\geq 10^{-8},\\
0, & \operatorname{tr}(A)<10^{-8}.
\end{cases}
\label{eq:app-trace-normalization}
\end{equation}
Thus, the overlap matrix is the trace-normalized weighted covariance,
$S_{\mathrm{ov}}=\mathcal N\{\widehat{\operatorname{Cov}}_s(X)\}$.

The imbalance matrices also require the mean and covariance of each treatment
arm. Let $n_t=\sum_i\ind(T_i=t)$ and
$\bar X_t=n_t^{-1}\sum_{i:T_i=t}x_i$, and define
\begin{equation}
\Sigma_t=\frac{1}{\max(n_t-1,1)}
\sum_{i:T_i=t}(x_i-\bar X_t)(x_i-\bar X_t)^\top,
\qquad t\in\{0,1\}.
\label{eq:app-arm-moments}
\end{equation}
Unlike the outcome-association covariances above, these arm-specific moments
use unit weights.

G$\Phi$-ACE measures outcome association across the fitting subset. With
$c=\widehat{\operatorname{Cov}}_s(X,Y)$, its outcome-utility matrix is
$S_Y^G=\gamma_Y\mathcal N(cc^\top)$. Its imbalance matrix instead measures
the difference between arm means relative to within-arm variation:
\begin{equation}
S_B^G=\mathcal N(d_wd_w^\top),\qquad
d_w=(\bar\Sigma+\delta_BI_p)^{-1/2}(\bar X_1-\bar X_0),
\quad \bar\Sigma=(\Sigma_1+\Sigma_0)/2.
\label{eq:app-global-matrices}
\end{equation}
Here $\gamma_Y$ scales outcome utility, while $\delta_B>0$ stabilizes the
covariance scaling.

A$\Phi$-ACE allows outcome associations to differ by treatment arm. Let
$c_t=\widehat{\operatorname{Cov}}_s(X,Y\mid T=t)$. Its matrices are
\begin{align}
S_Y^A&=\gamma_Y\mathcal N\!\left(\sum_{t=0}^1p_t c_tc_t^\top\right),
\nonumber\\
S_B^A&=\alpha_\mu\mathcal N(dd^\top)
+\alpha_\Sigma\mathcal N(D_\Sigma D_\Sigma^\top)
+\alpha_e\mathcal N(\beta_e\beta_e^\top),
\label{eq:app-arm-matrices}
\end{align}
where $p_t=n_t/n_f$, $d=\bar X_1-\bar X_0$, and
$D_\Sigma=\Sigma_1-\Sigma_0$. The vector $\beta_e\in\mathbb R^p$ contains
the non-intercept coefficients of the fitted ridge-logistic propensity model.
The coefficients $\alpha_\mu,\alpha_\Sigma,\alpha_e$ weight the mean,
covariance, and propensity-direction components of measured imbalance.

For each geometry expert, we form
\begin{equation}
G_U=S_{\mathrm{ov}}+S_Y+\eta I_p,\qquad
G_B=S_B+\rho I_p,
\label{eq:app-generalized-matrices}
\end{equation}
and solve the symmetric generalized eigenproblem $G_Uv=\lambda G_Bv$.
For a returned eigenvector,
$\lambda=\mathcal Q(v)=v^\top G_Uv/(v^\top G_Bv)$.
Ranking eigenvalues from largest to smallest therefore favors directions
with overlap-supported variation and, when included, outcome association
relative to measured treatment-group separation. The first
$k=\lceil p/2\rceil$ eigenvectors form $V_k$.

A$\Phi$-ACE standardizes the projected scores using fitting-subset moments
and concatenates them with the original standardized covariates, as in
\cref{eq:phi-augmented-map}. G$\Phi$-ACE additionally whitens its projected
scores before this concatenation. Let $Z$ be its $n_f\times k$ matrix of
fitting-standardized projected scores, $\bar Z$ its column means, and
$C_Z=(n_f-1)^{-1}(Z-\bar Z)^\top(Z-\bar Z)$ its covariance. The ZCA step is
\begin{equation}
Z_{\mathrm{ZCA}}=(Z-\bar Z)(C_Z+\delta_W I_k)^{-1/2},
\qquad \delta_W=10^{-4}.
\label{eq:app-zca}
\end{equation}
The whitened columns are standardized once more using fitting-subset
moments. A$\Phi$-ACE does not use this whitening step. All fitted
standardization, projection, and whitening operations are then applied
unchanged to validation and test observations.

Finally, O$\Phi$-ACE omits outcome association, setting $S_Y^O=0$, and uses
\begin{equation}
S_B^O=\alpha_\mu\mathcal N(dd^\top)
+\alpha_\Sigma\mathcal N(D_\Sigma D_\Sigma^\top)
+\alpha_e\mathcal N(\beta_e\beta_e^\top),
\qquad \alpha_\mu=\alpha_\Sigma=\alpha_e=1.
\label{eq:app-ophi-matrices}
\end{equation}
It therefore uses the same types of measured imbalance as A$\Phi$-ACE but
does not use outcomes to construct its geometry. Its retained
half-dimensional projection replaces the original covariates on the anchor
path rather than being concatenated with them; no ZCA whitening is applied.
Only fitting-subset $(X,T)$ enter this map.

\section{Additional Results}
\label{app:additional-results}

\subsection{Leave-One-Expert-Out Results}

All contrasts use validation-only weight refitting and predictions locked
before test truth is opened. Across the 1,235 tasks, the full and reduced
systems therefore differ only in the available expert column and the resulting
validation-frozen simplex weights. Paired Wilcoxon tests with within-benchmark
Holm adjustment support a positive O$\Phi$-ACE contribution on six of the seven
PEHE benchmarks; ACIC2016, ACIC2017, IHDPA, IHDPB, NEWS, and TWINS remain below
the 0.05 threshold. IHDP100 also has a positive paired-location test, although
its bootstrap confidence interval for the mean absolute change crosses zero;
we therefore treat that case as supportive but distribution-sensitive. No
expert-removal contrast is significant on JOBS after adjustment.

\begin{figure*}[!tbp]
\centering
\includegraphics[width=0.98\textwidth,height=0.72\textheight,keepaspectratio]{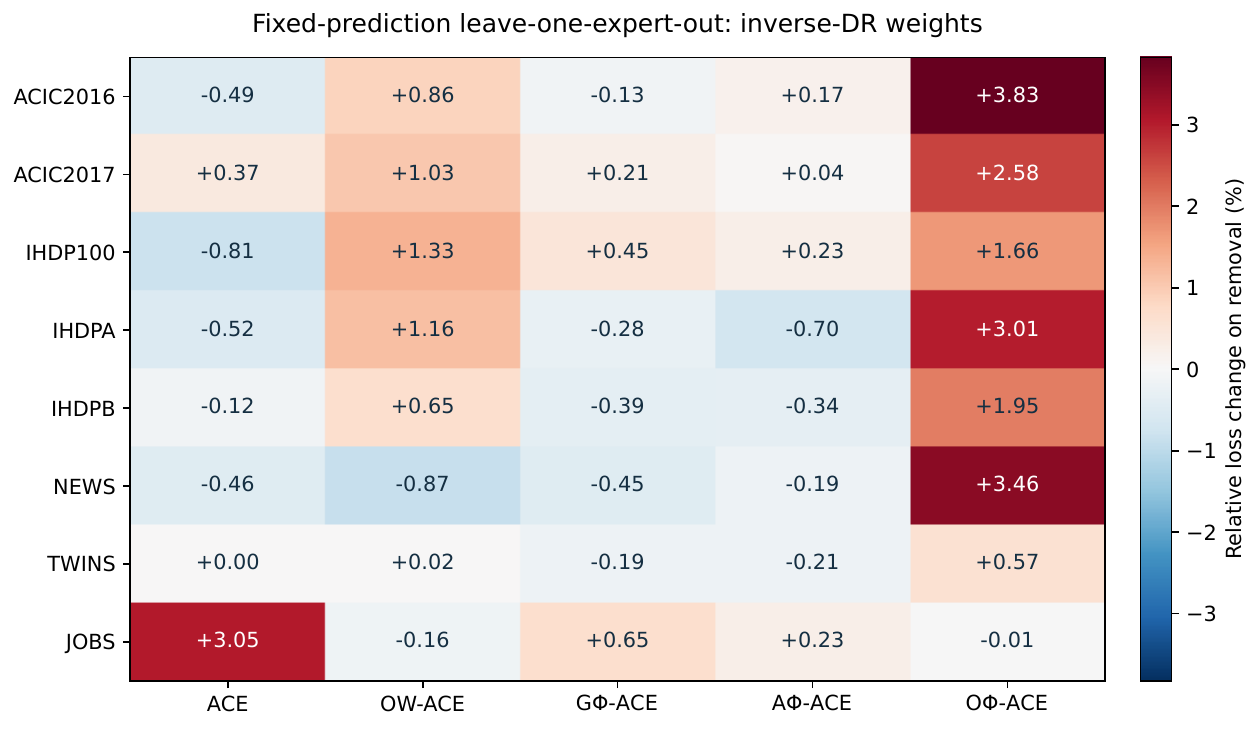}
\caption{Benchmark-specific leave-one-expert-out sensitivity under the primary
inverse-DR-risk selector. Each cell is the paired mean increase in the primary
loss after removing one expert, divided by the full-ensemble mean loss.
Positive values therefore favor retaining the expert. JOBS uses policy risk;
the remaining benchmarks use $\sqrt{\mathrm{PEHE}}$.}
\label{fig:loeo-heatmap}
\end{figure*}

\FloatBarrier

\subsection{Controls for the Aggregation Rule}
\label{app:aggregation-controls}

This analysis holds the five expert prediction columns fixed and changes only
the rule used to combine them. All rules use internal validation observations;
their weights or selected expert are frozen before test truth is opened. The
comparison therefore isolates aggregation behavior from expert training,
architecture, and random initialization. Equal-5 assigns weight $1/5$ to each
expert; Best-DR selects the smallest validation DR error; Convex-DR fits a
simplex-constrained DR regression; R-stacking uses the residualized R-loss;
Causal-Q uses the Q-aggregation objective; and Ridge-DR shrinks a DR regression
toward uniform weights.

\FloatBarrier

\begin{table}[!tbp]
\centering
\caption{Benchmark-balanced relative loss of each control minus GeoACE.
Positive values favor GeoACE. Intervals are descriptive 95\% bootstrap
intervals over the eight benchmark protocols.}
\label{tab:aggregation-bootstrap}
\footnotesize
\begin{tabular}{lrr}
\toprule
\textbf{Rule} & \textbf{Relative loss (\%)} & \textbf{95\% interval} \\
\midrule
Equal-5    & -0.12 & [-0.36, 0.01] \\
Best-DR    &  6.97 & [ 4.08, 9.63] \\
Convex-DR  &  3.69 & [ 1.34, 5.94] \\
R-stacking &  3.30 & [ 0.80, 6.16] \\
Causal-Q   &  4.90 & [ 2.27, 7.32] \\
Ridge-DR   & -0.09 & [-0.31, 0.10] \\
\bottomrule
\end{tabular}
\end{table}

\FloatBarrier

Task-level Wilcoxon tests with Holm adjustment agree with the aggregate
pattern but also expose regime dependence. GeoACE significantly improves on
Best-DR in seven benchmarks, on Convex-DR in six, on R-stacking in five, and
on causal Q-aggregation in six. NEWS favors Convex-DR and R-stacking, while
the small differences among GeoACE, Equal-5, and Ridge-DR change sign across
benchmarks. These paired tests are interpreted within benchmark; the
benchmark bootstrap in \cref{tab:aggregation-bootstrap} addresses the broader
cross-protocol summary.

\begin{figure*}[!tbp]
\centering
\includegraphics[width=0.98\textwidth,height=0.72\textheight,keepaspectratio]{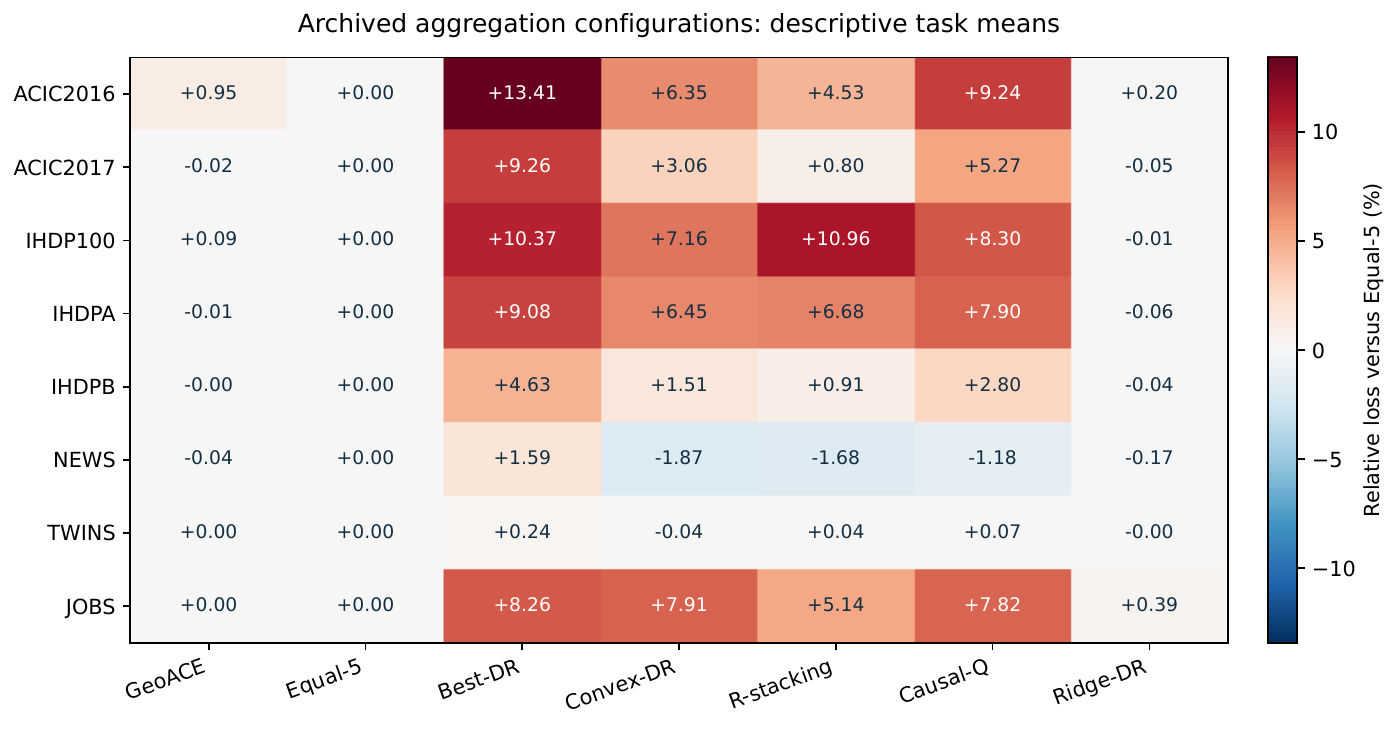}
\caption{Aggregation-rule controls using the same five frozen experts. Values
are benchmark-specific primary losses relative to Equal-5; negative values
favor the indicated rule. GeoACE, Equal-5, and Ridge-DR form a tight cluster,
whereas hard selection and several more flexible validation fits are less
stable across protocols.}
\label{fig:aggregation-controls}
\end{figure*}

\FloatBarrier

\Cref{tab:complete-benchmark-matrix} reports the complete test-set
$\sqrt{\mathrm{PEHE}}$ matrix underlying the rank analysis. The table retains
all methods rather than showing only per-benchmark leaders, making the strong
interaction between method and benchmark explicit.

\begin{table*}[!tbp]
\centering
\caption{Complete out-of-sample $\sqrt{\mathrm{PEHE}}$ comparison. Lower is
better; bold denotes the best result in each benchmark.}
\label{tab:complete-benchmark-matrix}
\scriptsize
\setlength{\tabcolsep}{3.2pt}
\resizebox{\textwidth}{!}{%
\begin{tabular}{lrrrrrrr}
\toprule
\textbf{Method} & \textbf{ACIC16} & \textbf{ACIC17} & \textbf{IHDP100} &
\textbf{IHDPA} & \textbf{IHDPB} & \textbf{NEWS} & \textbf{TWINS} \\
\midrule
BART         & \textbf{1.1418} & 0.5174 & 2.2985 & 0.8929 & 2.7530 & 2.2304 & 0.3117 \\
CFRNet-MMD   & 1.5669 & 0.9673 & 0.9883 & 0.9183 & 2.7432 & 1.7400 & 0.3184 \\
CFRNet-WASS  & 3.9804 & 2.1141 & 1.7064 & 0.8430 & 2.7050 & 3.0557 & 0.3246 \\
CF-DML       & 1.6658 & \textbf{0.4265} & 3.8323 & 1.3728 & 3.4368 & 2.4439 & \textbf{0.3113} \\
PairNet      & 3.1202 & 1.5311 & 2.0492 & 0.8136 & 2.4197 & \textbf{1.6685} & 0.3292 \\
S-learner    & 1.5341 & 0.6062 & 3.3104 & 1.1044 & 3.2180 & 1.9041 & 0.3124 \\
SubgroupTE   & 2.1409 & 1.0566 & 1.5852 & 0.8558 & 3.2099 & 1.7926 & 0.3251 \\
TEDVAE       & 2.3805 & 1.2346 & 1.6359 & 0.6502 & 2.2298 & 1.6729 & 0.3138 \\
TARNet       & 1.6005 & 0.8299 & 1.2820 & 1.0266 & 2.8909 & 1.7547 & 0.3160 \\
T-learner    & 1.5031 & 0.6666 & 2.6787 & 1.0034 & 3.0653 & 1.8613 & 0.3198 \\
X-learner    & 1.5338 & 0.6788 & 2.6464 & 1.0224 & 3.0528 & 1.8658 & 0.3207 \\
GeoACE       & 1.7518 & 0.7990 & \textbf{0.6742} & \textbf{0.5195} &
\textbf{2.0963} & 1.6707 & 0.3186 \\
\bottomrule
\end{tabular}%
}
\end{table*}

\begin{table}[!tbp]
\centering
\caption{Average ranks across the seven benchmarks in
\cref{tab:complete-benchmark-matrix} and out-of-sample JOBS policy risk.
Methods are ordered independently within each pair of columns; lower is better
for both measures.}
\label{tab:average-ranks}
\label{tab:jobs-complete}
\scriptsize
\setlength{\tabcolsep}{4pt}
\begin{tabular*}{\linewidth}{@{}l@{\hspace{0.7em}}r@{\extracolsep{\fill}}l@{\extracolsep{0pt}\hspace{0.7em}}r@{}}
\toprule
\multicolumn{2}{c}{\textbf{Seven-benchmark average rank}} &
\multicolumn{2}{c}{\textbf{JOBS policy risk}} \\
\cmidrule(r{1em}){1-2}\cmidrule(l{1em}){3-4}
\textbf{Method} & \textbf{Rank} & \textbf{Method} & \textbf{Risk} \\
\midrule
GeoACE & \textbf{3.714} & SubgroupTE & \textbf{0.22475} \\
BART & 5.000 & TARNet & 0.22910 \\
TEDVAE & 5.143 & X-learner & 0.22926 \\
CFRNet-MMD & 5.286 & T-learner & 0.23113 \\
TARNet & 6.143 & S-learner & 0.23738 \\
T-learner & 6.857 & GeoACE & 0.23758 \\
PairNet & 6.857 & PairNet & 0.24215 \\
X-learner & 7.286 & CF-DML & 0.24250 \\
S-learner & 7.429 & TEDVAE & 0.24759 \\
SubgroupTE & 7.714 & CFRNet-MMD & 0.25214 \\
CF-DML & 8.000 & CFRNet-WASS & 0.25491 \\
CFRNet-WASS & 8.571 & BART & 0.26416 \\
\bottomrule
\end{tabular*}
\end{table}

\FloatBarrier

\section*{Author Contributions}
Ali Haghpanah Jahromi conceived the research idea; designed and implemented the full methodology and evaluation pipeline; collected and curated the data; established the benchmark protocols; implemented and ran the comparison methods; collected and organized the results; performed the experiments and the preliminary and final analyses; incorporated the received feedback and revised the manuscript accordingly; and wrote the original manuscript. Mohammad Taheri and Zohreh Azimifar supervised and directed the research, advised on the methodological and experimental design, reviewed and interpreted the results, and provided scientific editing and constructive feedback on the manuscript.

\section*{Acknowledgment}
ChatGPT (OpenAI) was used only for English-language editing, grammar improvement, and software-engineering assistance with the authors' existing code, including refinements for parallel execution and related computational tasks. ChatGPT was not used to generate the research question or scientific ideas, design the algorithms, choose methodological or experimental procedures, interpret the results, or formulate the conclusions. The authors reviewed and approved all changes and take full responsibility for the manuscript.

\end{document}